\documentclass[letterpaper]{article} % DO NOT CHANGE THIS
\usepackage[preprint]{aaai2027}  % DO NOT CHANGE THIS
\usepackage[hyphens]{url}  % DO NOT CHANGE THIS
\usepackage{graphicx} % DO NOT CHANGE THIS
\usepackage{natbib}  % DO NOT CHANGE THIS AND DO NOT ADD ANY OPTIONS TO IT
\usepackage{caption} % DO NOT CHANGE THIS AND DO NOT ADD ANY OPTIONS TO IT
\usepackage[ruled, noend, noline]{algorithm2e}
\SetKw{Continue}{continue}
\SetKw{Break}{break}
\SetKw{Return}{return}
\SetKwProg{Fn}{Function}{ is}{end}

\usepackage{booktabs}

\usepackage{booktabs}
\usepackage{makecell}
\usepackage{amsmath,amssymb, amsfonts}
\DeclareMathOperator*{\argmin}{arg\,min}
\DeclareMathOperator*{\argmax}{arg\,max}
\usepackage{enumitem}
\usepackage{subcaption}
\usepackage{amsthm}
\theoremstyle{definition}
\newtheorem{definition}{Definition}
\theoremstyle{plain}

\newtheorem{theorem}{Theorem}

\usepackage{xspace}
\def\codt{\textsc{CODT}\xspace}
\def\streed{\textsc{STreeD}\xspace}
\def\contree{\textsc{ConTree}\xspace}

\def\rsoct{\textsc{RS-OCT}\xspace}
\def\cacontree{\textsc{CA-ConTree}\xspace}
\def\blossom{\textsc{Blossom}\xspace}
\def\quantbnb{\textsc{Quant-BnB}\xspace}
\def\gosdt{GOSDT\xspace}
\def\branches{\textsc{Branches}\xspace}
\def\numstrategies{18\xspace}

\title{Search Strategies for Optimal Classification and Regression Trees}
\author{
    Jacobus G. M. van der Linden,
    Mim van den Bos,
    Emir Demirovi\'{c} \\
}
\affiliations{
    Delft University of Technology\\
    J.G.M.vanderLinden@tudelft.nl, MimvandenBos@gmail.com, E.Demirovic@tudelft.nl
}

\begin{document}

\maketitle

\begin{abstract}
Optimal decision trees (ODTs) are compact, interpretable machine learning models that globally optimize a given objective, but their scalability  remains challenging.
While recent work has proposed a variety of search strategies to improve scalability, the precise contribution of each strategy remains unclear. To address this gap, we introduce a general algorithmic framework for ODTs that instantiates previously used search strategies and enables the definition of new ones. This provides a common lens through which to understand and compare different strategies, which we use to empirically investigate the effect of \numstrategies search strategies. Compared to the state of the art, the best strategy in our evaluation achieves significantly better anytime performance for classification, and improves runtime by more than an order of magnitude for regression.
\end{abstract}

% Uncomment the following to link to your code, datasets, an extended version or similar.
% You must keep this block between (not within) the abstract and the main body of the paper.
% \begin{links}
%     \link{Code}{https://aaai.org/example/code}
%     \link{Datasets}{https://aaai.org/example/datasets}
%     \link{Extended version}{https://aaai.org/example/extended-version}
% \end{links}

\section{Introduction}

Applying machine learning (ML) in high-stakes domains, such as healthcare, demands models that are both accurate and interpretable \citep{rudin2019stop_black_box}. Decision trees are both: they are compact, easy-to-understand models that can capture nonlinear relationships in data.
Specifically, small decision trees are easy to understand \citep{piltaver2016comprehensible} and therefore, we explore optimal decision trees (ODTs), which globally optimize an objective that includes accuracy, constraints, and/or regularization terms on tree size. It has been shown that ODTs outperform traditional decision tree learning in both size and accuracy \citep{manual_linden2024opt_vs_greedy}.

However, learning ODTs is NP-hard \citep{laurent1976constructing}, and scaling to realistic dataset sizes and depths remains a central challenge. For example,
approaches based on integer programming \citep{bertsimas2017optimal} and Boolean satisfiability \citep{shati2023sat_odt_nonbin} can learn ODTs for a variety of objectives and constraints, but are typically limited to small datasets (under a thousand instances) and shallow depth limits (up to depth three).

This motivated the development of dedicated search algorithms for ODTs, which have achieved order-of-magnitude runtime improvements \citep{demirovic2022murtree}.
The number of related methods and search strategies has surged in the past few years, including depth-first search \citep{aglin2020learning, manual_brita2025contree}, best-first search \citep{lin2020generalized_sparse}, AND-OR search \citep{chaouki2025branches}, and novel techniques for optimal regression trees \citep{manual_bos2024srt}.
Further variants aim for anytime performance (solution quality over time) \citep{kiossou2026anytime_codt}.
Each of these methods comes with different strengths and weaknesses, offering trade-offs between scalability, support for numeric features, anytime performance, and support for problem settings beyond classification.

However, despite significant progress, each approach relies on its own assumptions and dedicated techniques, making comparisons difficult both conceptually and empirically. As a result, it is often unclear whether improvements stem from algorithmic ideas, implementation-level engineering, or the restriction to a particular problem setting.

The same variety also makes it challenging to combine methods. Transferring a technique from one approach to the other may require designing an entirely new algorithm from scratch, even when, at a high level, the two methods differ only slightly. For instance, a search strategy developed for classification could in principle be applied to regression, but such transfers are not trivial in practice. In other words, the research field lacks a clear, unified understanding of the fundamental differences between ODT algorithms.

We address this gap by devising an algorithmic framework for ODTs, which generalizes previous work and instantiates many algorithms from the literature, either exactly or up to small and practically negligible differences.

The main benefit of this framework is that it gives us a common lens through which to view methods that were previously studied in isolation. Within a single framework, we can directly compare a wide range of search strategies, both for classification and regression. 
This clarifies the difference between methods and allows us to draw (empirical) conclusions about the strengths and weaknesses of individual algorithmic components in a more principled way. 

In our experiments, we use our framework to empirically compare \numstrategies existing and new search strategies — to the best of our knowledge, the largest such evaluation for ODTs. For classification, the best strategy achieves better anytime performance than state-of-the-art ODT methods; for regression, it improves runtime by more than an order of magnitude.
 
\begin{table*}[t!]
    \centering
    \small
    \begin{tabular}{lllccccc}

    \toprule    
    
    Year &
    Authors &
    Method &
    Task &
    \makecell{Search\\strategy} &
    \makecell{Continuous\\features} &
    \makecell{Similarity\\ bound}&
    \makecell{D2\\solver} %&
    %Caching  
    \\ \midrule

    \citeyear{aglin2020learning} &
    \citeauthor{aglin2020learning} & 
    DL8.5 &
    C &
    DFS &
    &
    &
    %&
    %Branch
    \\
    
    &%\citeyear{lin2020generalized_sparse} &
    \citeauthor{lin2020generalized_sparse} & 
    GOSDT &
    C &
    BFS &
    &
    \checkmark &
    %&
    %Dataset
    \\

    \citeyear{demirovic2022murtree} &
    \citeauthor{demirovic2022murtree} & 
    MurTree &
    C &
    DFS &
    &
    \checkmark &
    \checkmark %&
    %Dataset
    \\

    &% \citeyear{mazumder2022odt_continuous_bnb} &
    \citeauthor{mazumder2022odt_continuous_bnb} & 
    Quant-BnB &
    C/R &
    DFS & 
    \checkmark &
    \checkmark &
     %&
    \checkmark %- 
    \\

    &%\citeyear{kiossou2022dl8.5_anytime} &
    \citeauthor{kiossou2022dl8.5_anytime} & 
    LDS-DL8.5 &
    C &
    LDS &
     &
     &
     %&
    %Branch
    \\

    \citeyear{zhang2022sparse_regtrees} &
    \citeauthor{zhang2022sparse_regtrees} & 
    OSRT &
    R &
    BFS &
     & 
     &
     %&
    %Dataset
    \\

    &%\citeyear{mendeley_linden2023streed} &
    \citeauthor{manual_linden2023streed} & 
    STreeD &
    C/R &
    DFS &
     &
    \checkmark &
    \checkmark  %&
    %Dataset 
    \\

    &%\citeyear{demirovic2023anytime_blossom} &
    \citeauthor{demirovic2023anytime_blossom} & 
    Blossom &
    C &
    DFS &
     &
     &
     %&
    %- 
    \\

    \citeyear{chaouki2025branches} &
    \citeauthor{chaouki2025branches} & 
    Branches &
    C &
    AND-OR &
     &
     &
     %&
    %Branch 
    \\

    &%\citeyear{brita2025contree} &
    \citeauthor{manual_brita2025contree} & 
    ConTree &
    C &
    DFS &
    \checkmark &
    \checkmark &
    \checkmark %&
    %Dataset 
    \\

    \citeyear{kiossou2026beamsearch_dt} &
    \citeauthor{kiossou2026beamsearch_dt} & 
    CA-DL8.5 &
    C &
    LDS &
     &
     &
     %&
    %Dataset 
    \\

     &
    \citeauthor{kiossou2026anytime_codt} & 
    CA-ConTree &
    C &
    LDS &
    \checkmark &
    \checkmark &
    \checkmark %&
    %Dataset 
    \\
    \midrule

     &
    This paper & 
    CODT &
    C/R &
    Many &
    \checkmark &
    \checkmark &
    \checkmark %&
    %- 
    \\
    
    \bottomrule        
    \end{tabular}
    \caption{Recent custom search-based methods for learning ODTs. Three observations: (i) most methods are limited to classification (C) and some can also do regression (R); (ii) most methods only work for binarized features; and (iii) a large variety of search strategies have been proposed, but never properly compared.}
    \label{tab:related_work_methods}
\end{table*}

\section{Related Work}

\paragraph{Greedy approaches.}
Learning ODTs is NP-hard \citep{laurent1976constructing, ordyniak2021fpt_odts}, so early methods, such as AID \citep{morgan1963aid} and CART \citep{breiman1984cart} relied on \emph{greedy top-down induction}.
These methods recursively construct a tree by selecting the split that optimizes for the current node an information criterion such as entropy or mean-squared error.
This approach is as fast as sorting the data, but it constructs trees that in theory can be arbitrarily larger than optimal \citep{garey1974gdt_performance_bounds}, and also empirically yields larger and less accurate trees than ODTs \citep{manual_linden2024opt_vs_greedy}.

\paragraph{Solver-based ODT approaches.} 
Greedy tree induction was later complemented by global optimization using mixed-integer programming \citep{bertsimas2017optimal} and Boolean satisfiability \citep{narodytska2018sat_odt}. Although subsequent work improved scalability through stronger formulations and more compact encodings \citep{verwer2017flexible, verwer2019learning, avellaneda2020efficient_sat_odt, shati2023sat_odt_nonbin, aghaei2024strong}, these methods still struggle with larger datasets and deeper trees.

\paragraph{Custom search-based ODT approaches.}
Better scalability has been achieved by custom search-based methods that incorporate techniques such as dynamic programming (DP).
DP approaches for ODTs \citep{schumacher1976dp_odt} obtain better scalability because they exploit the fact that subtrees can be solved independently. Recently, \citet{aglin2020learning} combined DP with branch and bound, \citet{lin2020generalized_sparse} proposed new similarity-based bounds, and \citet{demirovic2022murtree} contributed, among other improvements, a specialized subroutine for solving depth-two subtrees.
Most DP approaches assume numeric and nominal feature data is binarized beforehand and require a coarse binarization to remain scalable. \citet{mazumder2022odt_continuous_bnb} and \citet{manual_brita2025contree}, on the other hand, provide new pruning techniques that improve scalability for numeric features.

\paragraph{Search strategies.} 
A major difference among these methods is their search strategy. 
\citet{aglin2020learning}, \citet{demirovic2022murtree}, \citet{mazumder2022odt_continuous_bnb}, and \citet{manual_brita2025contree} use \emph{depth-first search} (DFS) which 
has a lower memory footprint, but can have a poor anytime performance if the search space is large. \citet{demirovic2023anytime_blossom} aim to improve the anytime performance of DFS by balancing the effort spent on optimizing the left and right subtrees.

\citet{lin2020generalized_sparse} propose \emph{best-first search} (BFS) by maintaining a global priority queue of open subproblems. They mention their search strategy as one of their key improvements over previous DFS approaches, since it prunes the search space more aggressively. However, BFS has a higher memory footprint, and later DFS approaches again outperformed their method, leaving the impact of their search strategy as an open question. Additionally, they do not address how to order the subproblems in the priority queue.
\citet{chaouki2025branches} instead frame the problem as an \emph{AND-OR search} (AOS) problem by maintaining a lower-bound based priority queue \emph{per search node}. They also attribute the performance improvements that they find to their search strategy. AOS, however, also has a high memory footprint.
Finally, \citet{kiossou2026beamsearch_dt} and \citet{kiossou2026anytime_codt} use \emph{limited discrepancy search} (LDS) to search around an increasingly large neighborhood of the greedy solution to improve anytime performance. 

To summarize, a clear understanding of the strengths and trade-offs of these search strategies is still lacking. We therefore propose a unified framework that generalizes most of the aforementioned search strategies, supports flexible objectives and numeric features, and enables a systematic comparison of their performance (see Table~\ref{tab:related_work_methods}).

\section{Preliminaries}

\paragraph{Notation.}
Let $\mathcal{D} = \{ (x^{(i)}, y^{(i)}) \}_{i=1}^n$ be a \emph{dataset} of $n$ \emph{samples} each of which is described by a \emph{feature vector} $x \in \mathbb{R}^p$ together with a label $y \in \mathcal{Y}$. We use $\mathcal{F} = \{1, \ldots, p \}$ to describe the set of features while $x_f^{(i)}$ is the value of feature $f$ in sample $i$. 
For classification $\mathcal{Y}$ is some finite set of size $k$ and for regression $\mathcal{Y} = \mathbb{R}$. 
Let $\mathcal{D}^f$ denote the sorted values $x_f$ for $(x,y) \in \mathcal{D}$ and let $\mathcal{U}^f$ be all unique sorted values in $\mathcal{D}^f$, then $S^f$ is the sorted list of possible thresholds, i.e., the midpoints between each pair of consecutive values in $\mathcal{U}^f$. We use $\tau_i$ to refer to the threshold in the list $S^f$ at index $i$. Finally, $\mathcal{D}(f \leq \tau)$ filters a dataset by the test $f \leq \tau$, i.e., $\mathcal{D}(f \leq \tau) =  \{ (x, y) \in \mathcal{D} \mid x_f \leq \tau \}$. 

A binary \emph{decision tree} $T$ is a recursive function $T : \mathbb{R}^p \rightarrow \mathcal{Y}$ that returns a label for a given feature vector. A decision tree can contain both \emph{branching nodes} and \emph{leaf nodes}.
Each branching node has two children and is described by a branching feature $f$ and a split threshold $\tau$, such that samples with $x_f  \leq \tau$ are passed to its left child while the rest is passed to the right.
Each leaf node is described by a label $\hat{y} \in \mathcal{Y}$. When computing $T(x)$, the feature vector $x$ is passed through the branching nodes beginning with the root node, until a leaf node is reached.
The label of that leaf node is the output of $T(x)$.
We use $|T|$ to denote the number of branching nodes in a decision tree, which we call its \emph{size}. The \emph{depth} of a decision tree is the maximum number of branching nodes on any root-leaf path.

\paragraph{Problem definition.} Given a dataset $\mathcal{D}$, a maximum depth budget $d$, and a regularization penalty $\lambda$, let $\mathcal{T}(\mathcal{D}, d)$ describe all possible decision trees for $\mathcal{D}$ with a maximum depth of $d$, then the optimal decision tree $T^*$ is defined as the tree that minimizes the loss plus a size penalty:
\[
    T^* = \argmin_{T \in \mathcal{T}(\mathcal{D}, d)} \sum_{(x, y) \in \mathcal{D}} \mathcal{L}\big(y, T(x)\big) + \lambda |T| \,,
\]
where $\mathcal{L}(\cdot, \cdot)$ is the loss function, e.g., for classification the zero-one loss, and for regression the squared error loss.

\paragraph{AND-OR search.}
AND-OR search generalizes shortest-path search, i.e., ``OR search''. In an OR graph, every node represents an \emph{either-or} choice among its outgoing edges, so a solution is a path in the graph. AND-OR graphs additionally contain AND nodes which require you to take \emph{all} of its outgoing edges. As a consequence, a solution is a subgraph rather than a single path.
Previously, \citet{martelli1973additive_and_or}, \citet{verhaeghe2020cp_odt}, and \citet{chaouki2025branches} presented ODT optimization as an AND-OR search problem: OR nodes correspond to choosing a feature test, while AND nodes correspond to evaluating both the left and the right subproblem of the chosen test. 

\begin{figure*}
    \centering
    \begin{subfigure}{0.35\textwidth}
        \includegraphics[width=0.9\textwidth]{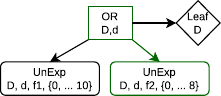}
        \caption{Select: based on the search strategy.}
    \end{subfigure}
    \hfill
    \begin{subfigure}{0.6\textwidth}
        \includegraphics[width=0.9\textwidth]{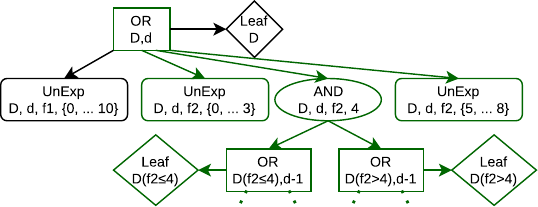}
        \caption{Expand: replace an unexpanded node with an AND node and two new unexpanded nodes for the remaining threshold intervals.}
    \end{subfigure}
    
    \caption{An example of (a) \textsc{Select} and (b) \textsc{Expand}. After expansion \textsc{BackPropagate} propagates the lower and upper bounds on the loss $\mathcal{L}$ from the newly expanded nodes back up to the root.}
    \label{fig:expand_select_backpropogate}
\end{figure*}

\section{Search Framework}
Our goal is to construct a framework for ODTs that supports a large variety of search strategies. To this end, we define an AND-OR search tree that is incrementally expanded in the order defined by the search strategies, and guarantees optimal solutions. Incremental expansion is necessary since materializing the whole search tree in memory is typically not possible. Bound-based pruning further helps to keep the search space small.
In the next section, we show that other approaches can be seen as instantiations of our framework.

\subsection{Defining the Search Tree}
We define an AND-OR search tree where the OR nodes decide the branching decisions (i.e., which feature test $f \leq \tau$ to split on), the AND nodes combine the left and the right subproblems, and the leaf nodes make the label assignment. Following \citet{manual_brita2025contree}, we include unexpanded (UnExp) nodes that store for a feature $f$ intervals of split indices $\{a, \ldots, b\} \subseteq \{ 1, \ldots, |S^f| \}$ such that all splits $f \leq \tau_i$ with $i \in \{a, \ldots, b\}$ are yet unexpanded. Formally:

\begin{definition}[Search tree]
\label{def:search_tree}
An ODT \emph{search tree} for the dataset $\mathcal{D}$ and depth limit $d$ is an AND-OR search tree $\mathcal{T}$ with as root an OR node with state $\langle \mathcal{D}, d \rangle$. Each OR node with state $\langle \mathcal{D}', d' \rangle$ in the tree $\mathcal{T}$ has the following children:
\begin{itemize}[itemsep=0pt, topsep=1pt]
    \item precisely one leaf node with state $\langle \mathcal{D}' \rangle$;
and, if $d > 0$:
    \item zero or more AND nodes with state $\langle \mathcal{D}', d', f, \tau \rangle$ with $f \in \mathcal{F}$ and $\tau \in S^f$; and
    \item zero or more UnExp nodes with state $\langle \mathcal{D}', d', f$, $\{ a, \ldots, b \} \rangle$ with $f \in \mathcal{F}$ and $\{ a, \ldots, b \} \subseteq \{ 1, ..., |S^f| \}$, provided that $
    |\{ a, \ldots, b \}| > 0$. 
\end{itemize}%
Each AND node with state $\langle \mathcal{D}, d, f, \tau \rangle$ has two children:
\begin{itemize}[itemsep=0pt, topsep=1pt]
    \item an OR node with state $\langle \mathcal{D}(f\leq \tau), d-1\rangle$; and
    \item an OR node with state $\langle \mathcal{D}(f > \tau), d-1\rangle$.
\end{itemize}%
UnExp nodes and leaf nodes have no children.
\end{definition}

The ODT is the subtree in $\mathcal{T}$ that minimizes the \emph{loss}.
\begin{definition}[Loss]
The \emph{loss} of a tree rooted in a node is:
    \begin{itemize}[topsep=1pt, itemsep=0pt]
        \item If it is a leaf with state $\langle \mathcal{D} \rangle$: 
        $\min_{\hat{y} \in \mathcal{Y}} \sum_{(x,y) \in \mathcal{D}} \mathcal{L}(y, \hat{y})$. 
        \item If it is an AND node with state $\langle \mathcal{D}, d, f, \tau \rangle$: the sum of the loss of its children plus the regularization penalty~$\lambda$.
        \item If it is an OR node with state $\langle \mathcal{D}, d \rangle$: The minimum of the loss of all its AND node children and its leaf child. 
    \end{itemize}
\end{definition}

For each leaf node, we store the label $\hat{y}$ that minimizes the loss, and in each OR node we store the split $f \leq \tau$ of the child AND node that minimizes the loss. When the search is complete, these values help construct the optimal decision tree from the search tree. The search is complete when the root node of the search tree is marked as \emph{complete}.

\begin{definition}[Complete] A node is \emph{complete} if it is expanded (OR, AND, or leaf) and all its children are complete.    
\end{definition}

The loss of incomplete nodes is unknown and therefore we track lower and upper bounds for it, and the incumbent loss (the best loss found so far for this node).
For complete nodes, the lower and upper bound are equal to the loss.

\begin{definition}[Lower bound]
\label{def:lower_bound}
    A \emph{lower bound} (LB) is any value less or equal to the loss of the node once it is complete.
\end{definition}%
Recent work has proposed a variety of lower bounds. In Appendix~\ref{app:backpropagation}, we describe which lower bounds we use.

\begin{definition}[Upper bound]
\label{def:upper_bound}
    An \emph{upper bound} (UB) for a node is a value such that any loss for this node larger than or equal to that value cannot yield an improved incumbent loss of the root node.
\end{definition}%
The incumbent loss of a node is a valid upper bound, as is the upper bound of its parent. A valid upper bound for an OR node is its parent AND node's upper bound minus its sibling's lower bound and the regularization penalty $\lambda$.

\subsection{Expanding the Graph}
We initialize the search tree with a single root OR node.
Its initial children are a leaf node, and one UnExp node for each feature $f \in \mathcal{F}$.
We then iteratively expand the graph by selecting and expanding an UnExp node.
After expansion, bounds on the loss and the incumbent loss are back-propagated along the path from the selected node to the root node. 
The full algorithm can therefore be described in terms of three procedures: \textsc{Select}, \textsc{Expand}, and \textsc{BackPropagate} (inspired by \citet{chaouki2025branches}), as shown in Fig.~\ref{fig:expand_select_backpropogate} and Alg.~\ref{alg:main}. 
The core difference between search strategies is how to select the next node to expand.

\setlength{\algomargin}{0.5em}
\begin{algorithm}[t!]
\caption{ODT search defined by the subprocedures \textsc{Select}, \textsc{Expand}, and \textsc{Back-Propagate}.}
\label{alg:main}
\DontPrintSemicolon
\Fn{$\operatorname{ODT}(\mathcal{D}, d)$} {
$\operatorname{root} \leftarrow \operatorname{Create-OR-Node}(\mathcal{D}, d)$\;
\While{$\operatorname{root}$ not  completed} {
$\operatorname{node}, \tau_i \leftarrow$ \textsc{Select}($\operatorname{root}$)\;
\textsc{Expand}($\operatorname{node}, \tau_i$) \;
\textsc{BackPropagate} ($\operatorname{node}$) \;
}
}
\end{algorithm}

\begin{definition}[Select]
\label{def:select}
The \textsc{Select} procedure is any function that for a given incomplete node $v$ returns any UnExp node with state $\langle \mathcal{D}, d, f, \{a, \ldots, b \} \rangle$ in the search tree rooted by $v$, and a threshold $\tau_i$ with $i \in \{a, \ldots, b \}$ to split on.
\end{definition}

In this work, we define \textsc{Select} recursively as shown in Alg.~\ref{alg:select} by distinguishing three cases: (i) how to select a child node for an OR node, (ii) how to select a child node for an AND node, and (iii) how to select a threshold for a given threshold interval. The three cases are decided by the subprocedures \textsc{SelectThreshold}, \textsc{SelectPriority}, and \textsc{SelectLeftRight} which are defined by each search strategy, which we discuss in the next section. 
We implement the selection within an OR node by maintaining a priority queue of all incomplete children, ordered by \textsc{SelectPriority}.

\begin{algorithm}[t!]
\caption{Selecting the next node to expand.}
\label{alg:select}
\DontPrintSemicolon
\Fn{\emph{\textsc{Select}}($\operatorname{node}$)} {
\uIf{$\operatorname{node}$ is unexpanded}{\Return $\operatorname{node}$, \textsc{SelectThreshold}($\operatorname{node}$)}
\uElseIf{$\operatorname{node}$ is an OR node} {
     $\operatorname{child} \leftarrow \displaystyle {\argmax_{\operatorname{child} \text{~of~} \operatorname{node}} \textsc{SelectPriority}(\operatorname{child})}$\;
     \Return \textsc{Select}($\operatorname{child}$)
}
\uElseIf{$\operatorname{node}$ is an AND node} {
     \Return \textsc{Select}(\textsc{SelectLeftRight}($\operatorname{node}$))
}
}
\end{algorithm}

\begin{definition}[Expand]
The \textsc{Expand} procedure takes an UnExp node $v$ with state $\langle \mathcal{D}, d, f, \{a, \ldots, b \} \rangle$ and a threshold $\tau_i$ to split on, with $i \in \{a, \ldots, b \}$, and replaces the node~$v$ with an AND node with state $\langle \mathcal{D}, d, f, \tau_i \rangle$ with its two OR children initialized in the same way as the root node. Two new UnExp nodes are added to the parent of $v$ with state $\langle \mathcal{D}, d, f, \{a, \ldots, i - 1 \} \rangle$ and $\langle \mathcal{D}, d, f, \{i +1, \ldots, b \} \rangle$, provided the threshold intervals are not empty.
\end{definition}
With the definitions for \textsc{Select} and \textsc{Expand} we can now state the first theorem that regardless of the search order, Alg.~\ref{alg:main} remains complete, which we prove in Appendix~\ref{app:proofs}.

\begin{theorem}[Completeness]
\label{th:complete}
Alg.~\ref{alg:main} (without the back-propagate step) results in a complete search tree for any choice of \textsc{Select} (according to Def.~\ref{def:select}).
\end{theorem}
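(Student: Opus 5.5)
The plan is a termination argument built on a finite potential that strictly decreases with every iteration, followed by a separate argument that the final tree is complete.

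\textbf{Potential.} Let $\Phi$ be the total number of (unexpanded) split indices that are still reachable in the search tree. Every UnExp node with state $\langle \mathcal{D}', d', f, \{a,\ldots,b\}\rangle$ is counted with the full number of split indices that could ever appear below it. Concretely:
\begin{itemize}[itemsep=0pt, topsep=1pt]
\item Define recursively $N(\mathcal{D}', d')$ as the maximal number of Expand operations that can occur in the subtree of an OR node with state $\langle \mathcal{D}', d' \rangle$.
\item For $d'=0$, set $N(\mathcal{D}',0)=0$.
\item For $d'>0$, set $N(\mathcal{D}',d') = \sum_{f}\sum_{i=1}^{|S^f|} \bigl(1 + N(\mathcal{D}'(f\le\tau_i),d'-1) + N(\mathcal{D}'(f>\tau_i),d'-1)\bigr)$.
\end{itemize}
This quantity is finite because $\mathcal{F}$ and each $S^f$ are finite and the depth decreases with each recursion. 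The potential of the current search tree is then the sum, over all UnExp nodes, of $\sum_{i\in\{a,\ldots,b\}} \bigl(1+N(\text{left}_i)+N(\text{right}_i)\bigr)$.

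\textbf{Strict decrease.} Each loop iteration expands one UnExp node at some index $i$, which removes the term $\sum_{j\in\{a,\ldots,b\}}(\cdot)$. In its place it adds:
\begin{itemize}[itemsep=0pt, topsep=1pt]
\item two UnExp nodes, covering $\{a,\ldots,i-1\}$ and $\{i+1,\ldots,b\}$;
\item two new OR children, whose fresh UnExp nodes contribute exactly $N(\text{left}_i)+N(\text{right}_i)$.
\end{itemize}
So $\Phi$ decreases by exactly $1$ per iteration. Since $\Phi\ge 0$ is an integer and \textsc{Select} always returns a valid UnExp node and index (Def.~\ref{def:select}), the loop terminates after at most $\Phi_0 = N(\mathcal{D},d)$ iterations. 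One must check that \textsc{Select} is well-defined whenever the root is incomplete. This holds because an incomplete node has an incomplete child, and following incomplete children must end at an UnExp node: expanded leaves are complete, and the tree is finite.

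\textbf{Completeness on termination.} When the loop stops, the root is complete by the loop condition. The definition of Select guarantees the loop does not get stuck, and termination forces $\Phi=0$, or at least the root to be complete. To show the search tree is complete in the stronger sense, namely that every possible split has been represented, I will prove by induction that whenever no UnExp nodes remain in an OR node's subtree, every $f\le\tau$ with $\tau\in S^f$ appears as an AND child. The key invariant is that the UnExp intervals and AND children of an OR node together partition $\{1,\ldots,|S^f|\}$ for each $f$. This invariant holds at creation and is preserved by Expand, which splits $\{a,\ldots,b\}$ into $\{a,\ldots,i-1\}$, $\{i\}$ and $\{i+1,\ldots,b\}$. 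Once no UnExp nodes remain, the AND children therefore cover all splits, so each OR node's loss equals the true minimum over $\mathcal{T}(\mathcal{D}',d')$.

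\textbf{Main obstacle.} The main difficulty is the interaction with the completeness marking. Without back-propagation, completeness is a purely structural property, as in the definition, so termination must be argued through $\Phi$ reaching $0$: with $\Phi=0$ there are no UnExp nodes, hence every node is expanded. I would also need to handle the edge case of OR nodes with $d'=0$ or with empty threshold sets; these have only a leaf child and are complete immediately. Handling this edge case is what makes the induction base work.
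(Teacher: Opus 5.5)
Your proof is correct, but it is organized differently from the paper's.

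\textbf{The paper's route.} It argues by induction on the remaining depth budget. $P(d)$ states that every OR node with budget $d$ ends up complete. The base case $d=0$ has only a leaf child. In the inductive step, leaf children are trivially complete and AND children are complete by $P(d-1)$. UnExp children are eventually all replaced: the materialized tree is finite and each expansion strictly shrinks the total size of the open intervals, so an interval $\{a,\ldots,b\}$ eventually yields exactly $b-a+1$ AND nodes.

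\textbf{Your route.} You use a single global termination measure $\Phi$, which is exactly the number of expansions still to be performed. You show it drops by one per iteration no matter which UnExp node and index \textsc{Select} returns. Completeness then follows from the loop condition. This uses your observation that every UnExp node has a non-empty interval and so contributes at least $1$, which makes $\Phi=0$ equivalent to having no UnExp nodes left.

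\textbf{What each buys.} Your version makes the termination step rigorous, which the paper's induction leaves implicit. Its hypothesis that nodes ``will eventually be completed by Alg.~\ref{alg:main}'' and its appeal to the loop not stopping until the root is complete both presuppose that the loop terminates. You also get an explicit iteration count $N(\mathcal{D},d)$, equal to the number of AND nodes in the fully materialized search tree, and a clear reason why the choice of \textsc{Select} is irrelevant. The paper's induction is more local and mirrors the recursive structure of the search tree. It directly yields the structural fact that each OR node acquires one AND child per split, which you recover separately through your partition invariant. That invariant, and your claim that each OR node's loss equals the optimum over $\mathcal{T}(\mathcal{D}',d')$, go beyond what the theorem requires.
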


After each expansion, Alg.~\ref{alg:main} does a back-propagation step: it propagates bounds, incumbent solutions, and prunes nodes that cannot yield an optimal solution. Formally:

\begin{definition}[Back-propagate]
\label{def:back_propagate}
    The \textsc{BackPropagate} procedure is any procedure that takes as input a node $v$, and for each node on the path from $v$ to the root, including the siblings of the OR nodes on that path, updates the incumbent and valid lower and upper bounds of the node based on its child nodes, and prunes any UnExp node for which its lower bound is larger than or equal to its upper bound.
\end{definition}%
Thus \textsc{BackPropagate} prunes and speeds up the search, but it never prunes the optimal solution, which we state in our final theorem and prove in Appendix~\ref{app:proofs}.

\begin{theorem}[Optimality preserving]
\label{th:optimal}
The \textsc{BackPropagate} procedure preserves optimality, i.e., it never prunes any UnExp node that after expansion would improve the incumbent of the root node.
\end{theorem}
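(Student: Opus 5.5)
The proof reduces to checking that the bounds used for pruning stay valid under \textsc{BackPropagate}. The core observation is a contradiction argument. Suppose an UnExp node $u$ is pruned, so $\mathrm{LB}(u) \geq \mathrm{UB}(u)$. Suppose also that some split $f \leq \tau_i$ in its interval, once expanded (possibly recursively, by Theorem~\ref{th:complete}), would yield a subtree whose loss $\ell$ improves the incumbent of the root. Definition~\ref{def:lower_bound} gives $\ell \geq \mathrm{LB}(u)$, so $\ell \geq \mathrm{UB}(u)$. Definition~\ref{def:upper_bound} says no loss $\geq \mathrm{UB}(u)$ can improve the root's incumbent, which contradicts the assumption. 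The real content is therefore that every bound stored after \textsc{BackPropagate} is valid, and I would prove this by induction over the sequence of expansions and updates.

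\textbf{Lower bounds.} I would show by induction that the updates preserve validity.
\begin{itemize}
\item A leaf's loss is exact.
\item For an AND node, $\mathrm{LB}(\text{left}) + \mathrm{LB}(\text{right}) + \lambda$ lower-bounds its loss, because the loss is additive by definition.
\item For an OR node, the minimum over the leaf child, the AND children and the UnExp children of their lower bounds is valid. This holds because the node's final loss is a minimum over those same options once all are expanded.
\item An UnExp node's bound must lower-bound every tree reachable through any threshold in its interval. This is exactly the property of the lower bounds described in Appendix~\ref{app:backpropagation}, which I would take as given.
\end{itemize}
Taking the maximum of the old and newly computed bound also stays valid, since each is individually a lower bound.

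\textbf{Upper bounds.} I would argue top-down along the path from the root. The root's incumbent is trivially a valid UB for the root. For an OR child $c$ of an AND node $a$ with sibling $s$, suppose $c$ has loss $\ell_c \geq \mathrm{UB}(a) - \mathrm{LB}(s) - \lambda$. Then the loss of $a$ is $\ell_c + \ell_s + \lambda \geq \mathrm{UB}(a)$, so by validity of $\mathrm{UB}(a)$ this cannot improve the root. For a child of an OR node, the parent's UB transfers directly: a child achieving loss $\ell$ gives the parent loss at most $\ell$, so if $\ell \geq \mathrm{UB}(\text{parent})$ the only way it could help is by making the parent's loss $\geq \mathrm{UB}(\text{parent})$. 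The incumbent of a node is valid because matching it cannot strictly improve anything upstream. Induction on depth from the root then gives validity of every UB on the path, including for the siblings of the OR nodes that the procedure updates.

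\textbf{Main obstacle.} The hardest step is monotonicity over time. Bounds computed earlier must remain valid after the root incumbent changes, since \textsc{BackPropagate} only refreshes nodes on one path while UBs elsewhere may be stale. My plan is to note that the root incumbent only decreases. A stale UB is therefore larger than or equal to what would be computed now, so it only under-prunes and never over-prunes. Hence a node pruned with any UB that was valid at the time of pruning remains correctly pruned: the strict-improvement criterion of Definition~\ref{def:upper_bound} refers to a root incumbent that can only decrease, which makes improvement only harder later.
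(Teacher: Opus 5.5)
Your opening contradiction argument is exactly the paper's proof: \textsc{BackPropagate} prunes only when $\mathrm{LB} \geq \mathrm{UB}$, every loss obtainable from the node is at least $\mathrm{LB}$, and by Def.~\ref{def:upper_bound} no such loss can improve the root incumbent. The paper stops there because Def.~\ref{def:back_propagate} already requires the maintained bounds to be valid, so your inductive check of the update rules and your monotonicity argument for stale bounds are extra justification the paper does not give, not a required step.
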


Appendix~\ref{app:implementation} provides further details on the algorithm.

\section{Search Strategies}
We use our framework described above to instantiate  \numstrategies~search strategies, divided over the core categories of depth-first search (DFS), best-first search (BFS), limited-discrepancy search (LDS), and AND-OR search (AOS). For the sake of comparison, we frame all of these search strategies in the AOS paradigm explained in the previous section, such that each search strategy defines for each search node what the next best node to expand is. 
We describe the search strategies by defining the three components of the \textsc{Select} method: (i) the priority ordering in each OR node, (ii) the choice between the left and right child nodes of an AND node, and (iii) the selection of the threshold within a threshold interval. Table~\ref{tab:search_strategies} provides a summary. Unless otherwise stated, the selection of the threshold index in a threshold interval $\{ a, \ldots, b \}$ is the midpoint $\lfloor (a + b) / 2 \rfloor$.

\begin{table}[t!]
    \centering
    \small
    \begin{tabular}{lccccc}
    \toprule
    
    \textsc{SelectPriority} &  
    DFS & 
    BFS &
    LDS &
    AND-OR \\ \midrule

    Expanded &
    1st &
    2nd &
    2nd &
    2nd \\

    Lowest lower bound &
    2nd  &
     &
     &
    1st \\

    Lowest $\mathcal{H}$ &
     &
    1st & 
    1st & 
    \\ \midrule

    \textsc{SelectLeftRight} & 
    UB &
    $\mathcal{H}$ &
    $\mathcal{H}$ &
    UB \\    

    \midrule

    \textsc{SelectThreshold} & 
    $\lfloor \frac{a + b}{2}\rfloor$ &
    $\lfloor \frac{a + b}{2}\rfloor$ &
    $\mathcal{H}$ &
    $\lfloor \frac{a + b}{2}\rfloor$ \\    

    \bottomrule
    \end{tabular}
    \caption{The priority order by which each search strategy selects its next node to expand. $\mathcal{H}$ refers to a custom heuristic. }
    \label{tab:search_strategies}
\end{table}

\paragraph{Depth-first search.} DFS always continues expanding an expanded node until it is completed and then selects the next UnExp node to expand. We define the following variants.
\textsc{DFS-ConTree} imitates \contree \citep{manual_brita2025contree} by selecting first on feature rank (its ordering based on Gini impurity), then on the largest threshold interval ($b-a+1$ for threshold interval $\{a, \ldots, b \}$), and finally on the split index.
\textsc{DFS-Prio}, on the other hand, prioritizes nodes with the lowest lower bound and then proceeds in the order of \contree.
\textsc{DFS-Random} selects a random node to expand. 
The choice between the left and right child is based on the largest upper bound (largest potential for improvement), except for \textsc{DFS-Blossom}, a variant of \textsc{DFS-Prio} that imitates \blossom \citep{demirovic2023anytime_blossom} by choosing the left or right child with the fewest expansions.

\paragraph{Best-first search.} BFS sorts all open subproblems in a global priority queue based on a heuristic value $\mathcal{H}$.
We frame this in our AOS framework by making a node's priority equal to the minimum of its own and all its children heuristic value $\mathcal{H}$.
The heuristic value $\mathcal{H}$ is also used to choose between the left and right node of an AND node.
In Appendix~\ref{app:bfs_heuristics} we describe 11 candidates for the heuristic $\mathcal{H}$, most of which are a (linear) combination of a node's support (the size of node's dataset, $|\mathcal{D}|$) and its lower bound.
In the main text, we highlight \textsc{BFS-Big-LB} and \textsc{BFS-Small-LB}.
\citet{lin2020generalized_sparse} proposed \textsc{BFS-Big-LB} which prioritizes nodes with large support and small lower bounds.\footnote{See \url{https://github.com/ubc-systopia/gosdt-guesses}.} In contrast, \textsc{BFS-Small-LB} prioritizes search nodes with small support and a small lower bound, which our experiments in Appendix~\ref{app:search_strategies_experiments} show has the best performance.

\paragraph{Limited discrepancy search.} 
We consider LDS as a special case of BFS where, following \citet{kiossou2026anytime_codt}, the heuristic $\mathcal{H}$ is a combination of the feature and split discrepancy (the zero-based rank of a feature and of a split based on Gini impurity respectively). Let $\delta_f$ and $\delta_s$ be the feature and split discrepancy, then the search order for nodes with values $(\delta_f, \delta_s)$ is $(0,0), (1,0), (0,1), (2,0), (1,1), \ldots$, i.e., first prioritized by the lowest sum of the feature and split discrepancy, and then by the lowest split discrepancy.
The heuristic value of an unexpanded node $
\langle \mathcal{D}, d, f, \{a, \ldots, b \} \rangle$ is based on the best split discrepancy of the splits in the interval $\{ a, \ldots, b \}$. We try both setting the splitting threshold to the mid point of the interval (the default: \textsc{LDS-Mid}) and the split point with the lowest discrepancy (\textsc{LDS}), which imitates \cacontree \citep{kiossou2026anytime_codt}.

\paragraph{AND-OR search.} AOS prioritizes the search node with the lowest lower bound. If two nodes share the same lower bound, expanded nodes are prioritized. Unlike BFS, described above, AOS considers only the lower bound of the \emph{current} node, and not the lowest value in its subtree. AOS imitates \textsc{Branches} \citep{chaouki2025branches}.

\begin{figure}[t!]
    \centering
    \includegraphics[width=\columnwidth]{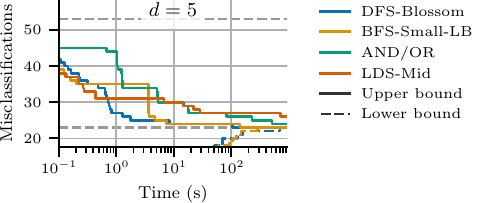}
    \caption{The anytime performance of four search strategy categories on the Wilt classification dataset ($d=5$).
    The solid colored lines are upper bounds (incumbent) and the colored dashed lines are lower bounds.
    The two gray dashed lines indicate the initial CART solution (top), and the best solution found by any strategy (bottom).
    }
    \label{fig:anytime_example}
\end{figure}
\begin{figure*}[t!]
    \centering
    \includegraphics[width=0.95\linewidth]{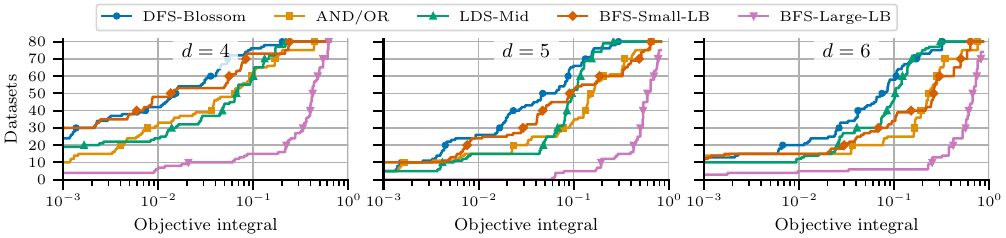}
    \caption{Cumulative distribution of the objective integral (OI) as a measure of anytime performance. The lines indicate for a given number on the horizontal axis what number of datasets are solved with an OI smaller than or equal to the given number. Lines that move quickly up indicate a better performance.}
    \label{fig:ecdf_ss_oi}
\end{figure*}
\begin{figure}[t!]
\centering
    \includegraphics[width=\linewidth]{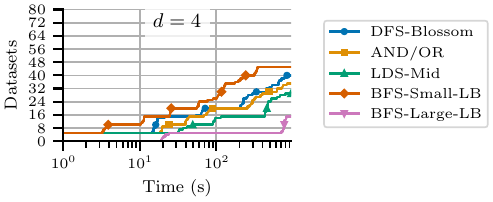}
    \caption{Cumulative distribution of runtime (s).}
    \label{fig:ecdf_ss_time}
\end{figure}

\section{Experiments}
We first compare the search strategies in our framework on runtime and anytime performance to identify the best approaches for different scenarios. We then compare the best strategy against state-of-the-art methods for classification and regression. Since all optimal methods find the same solutions, we omit out-of-sample comparisons and instead refer to \citet{mazumder2022odt_continuous_bnb}, \citet{manual_linden2024opt_vs_greedy}, and \citet{manual_brita2025contree} for previous comparisons with greedy learners. Appendix~\ref{app:more_experiments} provides additional results.

The results show that \textsc{BFS-Small-LB}, a best-first search that prioritizes nodes with small support and low lower bounds is best for completing the search, while \textsc{DFS-Blossom}, a depth first search that balances expansion of left and right children, is best for anytime performance. These search strategies also outperform the state of the art.

\subsection{Experiment Setup}
We ran the experiments on an Intel Xeon Gold 6448Y 32C 2.1GHz with eight cores and 32GB RAM running Linux Red Hat Enterprise 8.10 and repeated the experiments for each dataset five times.
For each run, we used a time-out of 15 minutes.
We implemented our framework \codt (Continuous Optimal Decision Trees) including all the aforementioned search strategies in Rust and provide a Python API.\footnote{The source code will be made public after paper acceptance.}
We describe the datasets and the experiment setup in more detail in Appendix~\ref{app:exp_setup}.

\paragraph{Metrics.} The main metrics by which we compare the search strategies and baselines are \emph{runtime} and the \emph{objective integral}. The runtime measures the time until the search is completed. The objective integral (OI) measures the quality of the incumbent solution over time. We assume that the incumbent is always initialized with the greedy heuristic solution with loss $\bar{s}$. Let $s^*$ be the optimal loss (or the best loss found by any method), let $s(t)$ be the loss of a method at time $t$, and let $\bar{t}$ be the time-out, then the OI measures:
\[
OI = \frac{1}{\bar{t}(\bar{s} - s^*)}\int_0^{\bar{t}} (s(t) - s^*)dt \,,
\]%
that is, the integral over time of the difference between the incumbent and the best solution, normalized to a value between $0$ and $1$ by dividing the integral by $\bar{t}(\bar{s} - s^*)$, the maximum obtainable value. Fig.~\ref{fig:anytime_example} represents visually what the OI measures: the surface area under the curve and between the two dashed lines that represent the CART solution and the best solution (found). A lower OI means finding good solutions earlier than a method with a high OI.
Additionally, in Appendix~\ref{app:memory} we report memory usage results.

\begin{figure*}[t!]
\centering
\includegraphics[width=\linewidth]{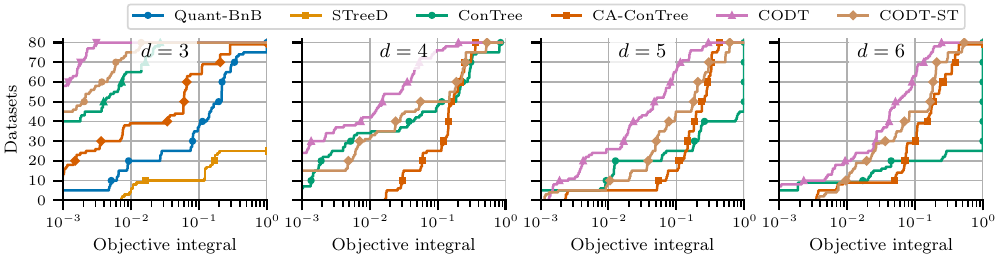}
\caption{Cumulative distribution of the objective integral (OI). 
For $d \leq 4$, \codt (ours) and \contree have the best anytime performance (lower OI). For $d \geq5$, \contree and \cacontree swap places but \codt still has the best performance.}
\label{fig:ecdf_others_oi}
\end{figure*}

\begin{figure}[t!]
    \centering
    \includegraphics[width=\linewidth]{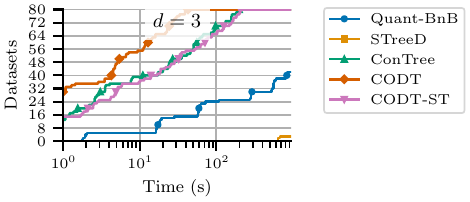}
    \caption{The cumulative distribution of runtime (s) for CODT and the classification baselines. \cacontree did not finish for any dataset within the given runtime limit. \codt's performance is close to \contree.}
    \label{fig:ecdf_others_time}
\end{figure}

\subsection{Comparing Search Strategies}
Here, we  present the results for the best search strategy within each category: \textsc{BFS-Small-LB}, \textsc{DFS-Blossom}, \textsc{LDS-Mid}, and \textsc{AND-OR}, and we include \textsc{BFS-Large-LB}, since this is the heuristic that \gosdt{} uses. The results for all search strategies are in Appendix~\ref{app:search_strategies_experiments}. 

Interestingly, Fig.~\ref{fig:ecdf_ss_time} shows that BFS with a heuristic that prioritizes nodes with small support and small lower bounds (\textsc{BFS-Small-LB}) has the best runtime. \textsc{BFS-Small-LB} solves open subproblems with low support first, which results in quick lower bounds that can be back-propagated in the search tree and help prune other solutions. 
In contrast, \textsc{BFS-Large-LB}, the heuristic used by \gosdt{} \citep{lin2020generalized_sparse}, is among the worst. 
Since \citeauthor{lin2020generalized_sparse} learn ODTs with a coarse binarization of numeric features, the search space in our experiments is orders of magnitude larger than in theirs, which may explain why their heuristic does not work well in our results, but \textsc{BFS-Small-LB} does.

For larger depth limits, almost all datasets result in a time-out for all search strategies, therefore we compare them by anytime performance (low OI).
Fig.~\ref{fig:ecdf_ss_oi} reveals \textsc{BFS-Small-LB} and \textsc{DFS-Blossom} as the best two approaches, while \textsc{LDS-Mid} and \textsc{AND-OR} show medium performance compared to the rest.
For $d\geq5$, \textsc{DFS-Blossom} performs significantly better than the rest. This highlights the importance of balancing expansion of left and right nodes, which has been neglected by most previous search strategies, but appears to be crucial for anytime performance.

These results motivate to use \textsc{DFS-Blossom} as \codt's search strategy for the rest of the experiments in this paper.

\subsection{Comparison with the Baselines}
\paragraph{Baselines.} We compare \codt with the following baselines: \gosdt, \quantbnb, \streed, \contree, \branches, and \cacontree (see Table~\ref{tab:related_work_methods}). 
\gosdt and \branches ran out of memory for all experiments and \blossom only supports binary classification, and therefore, we provide separate comparisons in Appendix~\ref{app:exp_other_baselines}.
We run \quantbnb only for $d=3$ because the implementation does not support larger depth limits.
Our implementation of \codt makes use of multi-threading, therefore, we also include the single-thread variant \textsc{CODT-ST} to compare \codt fairly with the single-threaded baselines. We describe the baselines and their configuration in Appendix~\ref{app:exp_setup}.

\paragraph{Results.}
Figs.~\ref{fig:ecdf_others_oi}-~\ref{fig:ecdf_reg_d3_time} show how our method \codt compares with the classification and regression baselines in runtime and objective integral.
\streed assumes binary features and struggles with the large number of splits that result from binarizing the numeric features.
Both in runtime and OI, \codt outperforms \quantbnb by a large margin, for both classification (110 times faster for $d=3$) and regression (40 times faster for $d=3$). 
In comparison to \contree, \codt performs similar in runtime performance, but \codt has a much better anytime performance for $d\geq 5$. 

Fig.~\ref{fig:ecdf_others_oi} shows that \citet{kiossou2026anytime_codt} improved the anytime performance of \contree for larger depth limits by iteratively searching increasingly large neighborhoods around the greedy solution. However, because it restricts expansion to splits near the greedy split, it cannot derive good lower bounds from solved subproblems and therefore struggles to prove optimality: it fails to solve any dataset within the time limit, including the smallest, Bank, which \codt and \contree solve in milliseconds for $d=3$.

In contrast,  our method \codt combines the best of both worlds and succeeds in both anytime performance and proving optimal solutions. Fig.~\ref{fig:ecdf_others_oi} shows it has the best anytime performance of all methods (statistically significant with a confidence level of 95\% according to a Friedman and Nemenyi test). Fig.~\ref{fig:ecdf_others_time} shows that \codt (single thread) has comparable runtime to \contree, and Fig.~\ref{fig:ecdf_reg_d3_time} shows it outperforms the state of the art for regression by a large margin. 

\begin{figure}[t!]
    \centering
    \includegraphics[width=\linewidth]{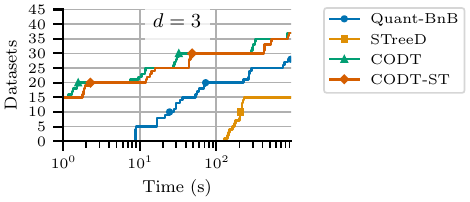}
    \caption{The cumulative distribution of runtime (s) for CODT and the regression baselines. \codt outperforms the state of the art by a large margin.}
    \label{fig:ecdf_reg_d3_time}
\end{figure}

\section{Conclusion}
We proposed a generalized framework for search strategies to learn optimal decision trees (ODTs) for both classification and regression.
The framework unifies many previously proposed search strategies, enabling a systematic comparison and the design of new strategies.
Our experiments demonstrate that two search strategies establish new state-of-the-art performance: a novel best-first strategy that (in contrast to previous work) prioritizes nodes with low support and strong lower bounds achieves the fastest optimality proofs, while a balanced depth-first strategy delivers the strongest anytime performance. These results show that the search strategy is a key determinant of ODT solver performance. Future work should investigate whether these insights extend to other AND-OR search problems.

\bibliography{references, manual_references}
\clearpage

\clearpage
\appendix
\setcounter{secnumdepth}{2}

\section{Detailed Method Description}
\label{app:implementation}
The main text already described the \textsc{Select} procedure. In this appendix, we explain the \textsc{Expand} and \textsc{BackPropagate} procedures, and provide other details.

\subsection{Expansion}
\label{app:expand}
Alg.~\ref{alg:expand} shows the pseudo-code for expansion given an unexpanded node for dataset $\mathcal{D}$, feature $f$, and open threshold interval $\{a, \ldots, b\}$, and a split point $\tau_i$.
The given node is first removed from its parent. Then a new AND node is created for the given split point. An AND node is initialized with two OR children with the data divided based on the split $f \leq \tau_i$.
Each OR node always contains a leaf node as a child.
If the remaining depth budget $d$ is larger than zero, then one unexpanded node is created for each numeric feature $f$ with the set of open thresholds set to $S^f$, the set of all thresholds $\tau$ that results in a unique split $f \leq \tau$ without an empty dataset on either side. 
After the AND node is created and added to the parent, two new unexpanded nodes are added to the parent with threshold intervals $\{a, \ldots, i-1\}$ and $\{i+1, \ldots, b \}$ provided the new intervals are not empty. This concludes the expansion step. See Fig.~\ref{fig:expand_select_backpropogate} for a visual example of an expansion step.

\begin{algorithm}[h!]
\caption{Expanding a node at split point $\tau_i$.}
\label{alg:expand}
\DontPrintSemicolon
\Fn{\emph{\textsc{Expand}}$(\operatorname{node}, \tau_i)$} {
$\langle \mathcal{D}, d, f, \{a, \ldots, b\} \rangle \leftarrow \operatorname{node.state}$ \;
$\operatorname{parent} \leftarrow \operatorname{node.parent}$ \;
$\operatorname{parent.remove}(\operatorname{node})$ \;
$\operatorname{child} \leftarrow \operatorname{Create-AND-Node}(\mathcal{D}, d, f, \tau_i)$\;
$\operatorname{parent.add}(\operatorname{child})$ \;
\uIf{$|\{a, \ldots, i-1\}| \geq 1$} {
    $\operatorname{left} \leftarrow \operatorname{Unexp}(\mathcal{D}, d, f, \{a, \ldots, i-1\})$\;
    $\operatorname{parent.add}(\operatorname{left})$ \;    
}
\uIf{$|\{i+1, \ldots, b\}| \geq 1$} {
    $\operatorname{right} \leftarrow \operatorname{Unexp}(\mathcal{D}, d, f, \{i+1, \ldots, b\})$\;
    $\operatorname{parent.add}(\operatorname{right})$ \;    
}
}
\Fn{$\operatorname{Create-AND-Node}(\mathcal{D}, d, f, \tau_i)$} {
    %$\mathcal{D}_L =  \{ (x, y) \in \mathcal{D} \mid x_f \leq \tau_i \}$ \;
    %$\mathcal{D}_R =  \{ (x, y) \in \mathcal{D} \mid x_f > \tau_i \}$ \;
    $\operatorname{left} \leftarrow \operatorname{Create-OR-Node}(\mathcal{D}(f \leq \tau_i), d-1)$ \;
    $\operatorname{right} \leftarrow \operatorname{Create-OR-Node}(\mathcal{D}(f > \tau_i), d-1)$ \;
    $\operatorname{node} \leftarrow \operatorname{AND-Node}(\mathcal{D}, d)$\;
    $\operatorname{node.add}(\operatorname{left})$\;
    $\operatorname{node.add}(\operatorname{right})$\;
    \Return $\operatorname{node}$
}
\Fn{$\operatorname{Create-OR-Node}(\mathcal{D}, d)$} {
    $\operatorname{node} \leftarrow \operatorname{OR-Node}(\mathcal{D}, d)$ \;
    $\operatorname{node.add}(\operatorname{Leaf}(\mathcal{D}))$\;
    \uIf{$d > 0$} {
        \For{$f \in \mathcal{F}$} {
            $\operatorname{child} \leftarrow \operatorname{Unexp}(\mathcal{D}, d, f, \{ 1, \ldots, |S^f| \})$\;
            $\operatorname{node.add}(\operatorname{child})$
        }
    }
    
    \Return $\operatorname{node}$
}
\end{algorithm}

\paragraph{Terminal case.} Previous work \citep{demirovic2022murtree, manual_brita2025contree} observed that major improvements can be obtained by replacing the terminal case $d=0$ with a specialized exhaustive procedure for when the remaining depth budget is one ($d=1$) or two ($d=2$). For example, for $d=1$, given a specific split $f \leq \tau_i$, we can find an optimal subtree in linear time \citep{manual_brita2025contree}, which is much faster than the normal procedure to create a search node for each possible child.

In this work, we also exploit this by providing a terminal case for $d = 2$. This means that whenever the algorithm reaches the function $\operatorname{Create-OR-Node}$ with $d=2$, instead of expanding the graph, we call a special subprocedure $\operatorname{Solve-D2}$ that immediately returns the optimal depth-two solution for this subproblem. $\operatorname{Solve-D2}$ is defined similarly to the main optimization loop (see Alg.~\ref{alg:main}), but during its expansion step, instead of creating a child AND node, it solves the $d=1$ subproblem exhaustively by using the algorithm from \citet{manual_brita2025contree}, which we generalize beyond classification.

Our generalization requires that the objective can be expressed as a function over per-sample cost tuples that are element-wise additive. For example, for minimizing misclassification, the cost tuple is the class count. For a problem with four classes ($\mathcal{K} = \{0,1,2,3\}$), a sample with label $1$ would result in cost tuple $\langle 0, 1, 0, 0\rangle$. These cost tuples are element-wise additive, so that we can compute the total class count for a dataset by adding the cost tuples for each sample. The final loss function takes a cost tuple $c$ and computes its cost:
\[
\operatorname{loss}_{\operatorname{misclassification}}(c) = \sum_i c_i - \max_i c_i \,.
\]%
Similarly, the optimal label is given by 
\[
\operatorname{label}_{\operatorname{misclassification}}(c) = \argmax_i c_i \,.
\]%

For regression, we follow \citet{manual_bos2024srt} by breaking down the costs into a three-tuple. For each sample with label $y$, we keep track of its label $y$, the square of its label $y^2$, and the sample count. So the initial cost tuple for a single sample is $\langle y, y^2, 1\rangle$. We can compute the sum of squared errors (SSE) as follows:
\[
\operatorname{loss}_{\operatorname{SSE}}(c) = c_1 - \frac{(c_0)^2}{c_2} \,,
\]%
which is the same as computing 
\[
\sum_{(x,y) \in \mathcal{D}} y^2 - \frac{(\sum_{(x,y) \in \mathcal{D}} y)^2}{|\mathcal{D}|} \,.
\]%
The optimal label is the mean
\[
\operatorname{label}_{\operatorname{SSE}}(c) = \frac{c_0}{c_2} \,.
\]%

With these defined, we can now describe the general version of the algorithm first proposed by \citet{manual_brita2025contree} in Alg.~\ref{alg:d1}, where $\operatorname{loss}(c)$, $\operatorname{label}(c)$, and $\operatorname{cost}(x,y)$ describe respectively the loss, label, and per-sample costs for a given optimization task.

\begin{algorithm}[t!]
\caption{Fully solving a depth-one node for dataset $\mathcal{D}$ and split $f \leq \tau$. Note that summations on $c$ are element-wise addition.}
\label{alg:d1}
\DontPrintSemicolon
\Fn{$\operatorname{Solve-D1}(\mathcal{D}, f, \tau)$} {
    $c_L, c_R \leftarrow \operatorname{zero-cost} \operatorname{tuple}$\;
    \For{$(x,y) \in \mathcal{D}$} {
        \uIf{$x_f \leq \tau$} {
            $c_L \leftarrow c_L + \operatorname{cost}(x,y)$\;
        }
        \uElse {
            $c_R \leftarrow c_R + \operatorname{cost}(x,y)$\;
        }
    }
    $\theta_L \leftarrow \operatorname{loss}(c_L), \theta_R \leftarrow \operatorname{loss}(c_R)$\;
    \For{$f_2 \in \mathcal{F}$} {
        $c_{LL}, c_{RL} \leftarrow \operatorname{zero-cost} \operatorname{tuple}$\;
        \For{$(x,y) \in \mathcal{D} \text{~sorted by~} f_2$} {
            \uIf{$x_f \leq \tau$} {
                $c_{LL} \leftarrow c_{LL} + \operatorname{cost}(x,y)$\;
                $\theta_L \leftarrow \min (\theta_L, 
                    \operatorname{loss}(c_{LL}) + \operatorname{loss}(c_L - c_{LL}) + \lambda)$ \;
            }
            \uElse {
                $c_{RL} \leftarrow c_{RL} + \operatorname{cost}(x,y)$\;
                $\theta_R \leftarrow \min (\theta_R, 
                    \operatorname{loss}(c_{RL}) + \operatorname{loss}(c_R - c_{RL}) + \lambda)$ \;
            }
            \lIf{$\theta_L \leq \lambda ~\wedge~ \theta_R \leq \lambda$} { \Break }
        }
        
    }
    \Return $\theta_L, \theta_R$
}
\end{algorithm}

In the first phase, we compute for the left and right subtree, split at $f \leq \tau$, what the cost tuple sum is for the data on the left ($c_L$) and the right of the split ($c_R$) are. These cost tuples are then used to compute an initial best loss $\theta_L$ and $\theta_R$ for the left and right subtree respectively. These solutions represent forming a leaf node on the left and right side of the given split.

In the second phase (the second for loop), we loop over all features and consider all possible splits on the second level of the subtree with that feature, i.e., the root node of the depth-two tree splits on $f \leq \tau$, and now we examine by a linear scan all possible splits in the two child nodes with feature $f_2$. For both the left and right subtree, we initialize the cost tuple on the left with the zero-cost tuple. Here, $c_{LL}$ represents the cost tuple of the left node in the left subtree and $c_{RL}$ represents the cost tuple of the left node in the right subtree. Then, one by one, sorted by $f_2$, we move one sample from the right node in the subtree to the left node by adding it to $c_{LL}$ (or $c_{RL}$ respectively). W.lo.g., the explanation continues for only the left subtree. We compute the loss over the samples that are now in the left node of the left subtree using $\operatorname{loss}(c_{LL})$. The loss over the remaining samples in the right node of the left subtree is obtained by subtracting the left node cost tuple from the left subtree's total: $c_L - c_{LL}$, for which we also compute the loss $\operatorname{loss}(c_L - c_{LL})$. Since this introduces a new split, we add the complexity cost $\lambda$ to get the total loss of the left subtree and compare this to the previous best left subtree cost. The same procedure also holds for the right subtree.

Finally, if at any time the costs for both subtrees are equal to or less than the complexity cost, we can finish the search early, since no improvement is possible anymore. The final best loss for the left and right subtree with root split $f \leq \tau$ is returned.

\subsection{Back-Propagation}
\label{app:backpropagation}
After expansion, the resulting solution and possibly new lower and upper bounds, and incumbent solutions are back-propagated along the path toward the root of the search tree. 

\paragraph{Updating the incumbent.}
For updating the incumbent, the procedure is straightforward. If the expansion resulted in a subtree with a lower loss than the incumbent, it is replaced. This is back-propagated along the path to the root by summing the best left subtree loss with the best right subtree loss and the complexity cost, and replacing the incumbent of each node on the path if the new loss is lower.

\paragraph{Lower bounds.}
For updating the lower bounds in each node (and for each possible split in that node), the procedure is more complicated.
We use the following sources for lower bounding:
\begin{description}
    \item[Trivial] For the loss functions considered in this work, zero is always a valid lower bound.
    \item[One-step lookahead] The optimal solution is either a leaf node or uses at least one split. Therefore, a valid lower bound is the minimum of the complexity cost $\lambda$ and the loss of the leaf node \citep{lin2020generalized_sparse}.
    \item[Clustering] The clustering lower bound considers the loss if any form of partitioning is allowed (ignoring the available splits). For regression, this is the $k$-means lower bound used by \citet{zhang2022sparse_regtrees} and \citet{manual_bos2024srt}. We adapt the bound for classification by assuming that the $k$ most frequent classes will be classified correctly. For both cases, we iterate over $k$ to compute the bound $k\lambda + \operatorname{clustering-loss}_k(\mathcal{D})$, with $k$ ranging from $1$ to the maximum number of remaining leaf nodes in the subtree.
    \item[Optimal solutions] An optimal solution to a subproblem is a valid lower bound.
    \item[Remaining splits] We keep track of the lower bound for each remaining unexpanded split interval. The minimum of these lower bounds is a valid lower bound for a node. In our implementation, we only use this bound if we know for a search strategy that the first node in an OR node's priority queue is guaranteed to be a valid lower bound. We only check the first node of the queue to prevent a costly scan over the whole queue.    
    \item[Similarity] As we further explain below, similarity between subproblems can be exploited to get lower bounds.
    \item[Propagation] The total loss of a subtree is the sum of the left and right subtree loss plus the complexity cost $\lambda$. Therefore lower bounds can be propagated to their parents.    
\end{description}

\paragraph{Similarity lower bounds and pruning.}
Similarity between subproblems can be exploited to prune the search space and compute lower bounds. We adopt (and adapt to regression) the three pruning techniques presented by \citet{manual_brita2025contree}, each of which are based on the \emph{similarity lower bound} by \citet{lin2020generalized_sparse} and \citet{demirovic2022murtree}.

The first two, \emph{neighborhood pruning} and \emph{interval shrinking}, reduce the size of open split threshold intervals by reasoning from similarity with an optimal solution for a specified threshold. E.g., if the optimal loss with root feature test $f \leq \tau_i$ is equal to $s$, the current incumbent solution has value $u$, then any replacement of the incumbent needs to improve on $s$ by at least $\delta = s - u$. For minimizing misclassification score, this means that any threshold $\tau_a < \tau_i$ or $\tau_b > \tau_i$ can only lead to an improving solution if at least $\delta$ samples switch from the left to the right subtree (or vice versa) with the new split threshold. This insight is exploited to shrink the intervals on the left and right of $\tau_i$ from $\tau_{i-1}$ to $\tau_a$ and from $\tau_{i+1}$ to $\tau_b$

The third pruning technique, \emph{sub-interval pruning} (see also \citet{mazumder2022odt_continuous_bnb}), reasons about two solved subproblems, one for $\tau_a$ and another for $\tau_b$, such that $\tau_a < \tau_b$. Let $s_L(a)$ be the loss of the left subtree with root $f \leq \tau_a$ and $s_R(b)$ the loss of the right subtree with root $f \leq \tau_b$. Then, if $s_L(a) + s_R(b) \geq u$, with $u$ the incumbent solution for this subtree, then any threshold $\tau_i$ with $\tau_a < \tau_i < \tau_b$ cannot lead to an improving solution, and thus any interval within these two thresholds should be pruned. This pruning technique applies for both classification and regression.

\paragraph{When to prune.}
We could prune unexpanded nodes from the queue as soon as its lower bound exceeds the incumbent. However, when we prune a node as soon as its lower bound equals the incumbent, we get no useful information for pruning the neighboring splits. If instead, we would have searched longer and found a higher lower bound, we could have pruned more. 

Therefore, we adapt the search and pruning strategy. Rather than immediately pruning nodes that are proven to be suboptimal, we continue the search, and only prune a node if it cannot further improve the bound of any other neighboring node (nodes with a neighboring split threshold).
This is the case when the entire remaining interval can be pruned or when the optimal solution has been found.

\subsection{Implementation Details}
Finally, this section provides some last details on our implementation.

\paragraph{Cache.} Unlike most previous approaches to optimal decision tree search, we do not use caching. Therefore, \codt is better described as a divide and conquer approach with branch-and-bound then dynamic programming. 
Previous dynamic programming methods were typically developed under the assumption of a coarse binarization, which resulted in frequent occurrence of repeated subproblems. However, given our numeric input data and pruning techniques, we observe very few reoccurring subproblems. However, if \codt would be applied to data where subproblems are repeated frequently, a cache could easily be added.

\paragraph{Multi-threading.} To speed up computation, we add support for multi-threading.
We create $|\mathcal{F}|$ independent search trees for each of the $|\mathcal{F}|$ initial children of the root OR node, i.e., for each $f \in \mathcal{F}$  we create a search tree with an OR root node that only has one unexpanded child, the unexpanded node with state $\langle \mathcal{D}, d, f, \{ 1, \ldots, |S^f| \} \rangle$. The selected search strategy is then independently applied in a separate thread for each search tree. The final solution is the best solution among all threads. Incumbent solutions for the root node are shared with other threads. The search is over when all threads have closed.

\section{Search Strategies}
\label{app:extended_search_strategies}
In this appendix, we provide a full description of all search strategies evaluated in this paper. We group the search strategies into three main categories: DFS, BFS (including LDS) and AND-OR. For AND-OR we only consider one variant, and therefore no further details need to be given.

For all search strategies, unless otherwise specified, \textsc{SelectThreshold} for a threshold interval $\{a, \ldots, b \}$ selects the midpoint $\lfloor (a+b)/2 \rfloor$.

\begin{table*}[t!]
    \centering
    \begin{tabular}{lcl}
    \toprule
         Name &  
         Definition of $\mathcal{H}$ &
         Intuition \\ \midrule

          \textsc{BFS-Curiosity} &
         $\frac{LB}{|\mathcal{D}|}$ &
         Lowest minimum error first, weighted by impact on the root \\

         \textsc{BFS-LB} &
         $LB$ &
         Lowest minimum error first \\

         \textsc{BFS-Small} &
         $|\mathcal{D}|$ &
         Small nodes first for quick bounds \\

         \textsc{BFS-Large} &
         $-|\mathcal{D}|$ &
         Large nodes first for big improvements in bounds \\
         
         \textsc{BFS-Large-LB} &
         $LB - |\mathcal{D}|$ &
         Most samples that may still be classified correctly (what GOSDT uses) \\
         
         \textsc{BFS-Small-LB} &
         $LB + |\mathcal{D}|$ &
         Small nodes first for quick bounds, but also consider lower bounds \\

         \textsc{BFS-LB-t-Large} &
         $LB - \varepsilon \cdot |\mathcal{D}|$ &
         Lowest minimum error first, break ties with larger nodes\\
         
         \textsc{BFS-LB-t-Small} &
         $LB + \varepsilon \cdot |\mathcal{D}|$ &
         Lowest minimum error first, break ties with smaller nodes \\

        \textsc{BFS-Large-t-LB} &
         $-|\mathcal{D}| + \varepsilon \cdot LB$ &
         Large nodes first with lower bound as tie breaker\\
         
         \textsc{BFS-LB-t-Small} &
         $|\mathcal{D}| + \varepsilon \cdot LB$ &
         Small nodes first with lower bound as tie breaker \\

    \bottomrule
    \end{tabular}
    \caption{An overview of the heuristics that combine support and lower bound. Here $\varepsilon = 1\times10^{-6}$ turns the term that follow into a tie breaker.}
    \label{tab:bfs_heuristics}
\end{table*}
\subsection{DFS Variants}
The main idea of depth-first search (DFS) is to continue expanding a subtree of the search tree until it is fully expanded, and only then to continue expanding another search node. Therefore, the first selection criterion for DFS in the priority order is to select nodes that are expanded. The remaining priority ordering differs per DFS variant. Based on \citet{demirovic2022murtree}, the default selection of the left or right child in an AND node is to select the child with the highest upper bound (initially from the leaf node solution).

We include the four variants of DFS in our analysis, the following explains how each variant prioritizes nodes in an OR node whenever no expanded node is in the priority queue anymore, and how the left or right node is selected in AND nodes if this deviates from the default.

\begin{description}
    \item[\textsc{DFS-Random}] selects the next node to expand in each OR node randomly. The choice between the left and right subtree in an AND node is also made randomly. This variant is included as a baseline to compare the other methods against.
    \item[\textsc{DFS-ConTree}] simulates how \contree searches. Its second priority criterion is a feature's rank (the ranking of the feature based on Gini impurity). Its third priority criterion is the size of the threshold interval (larger first). Its fourth priority criterion is the start point of the interval (lower start point first).
    \item[\textsc{DFS-Prio}] only differs from \textsc{DFS-ConTree} in one point. Its second priority criterion is a node's lower bound. The remaining priority criterion follows the same order: feature rank, size of the threshold interval, start of the threshold interval.
    \item[\textsc{DFS-Blossom}] adapts \textsc{DFS-Prio} in one way: the left or right child in an AND node is not selected by the highest upper bound, but by the fewest number of previous expansions. This ensures that the left and right side are explored in a balanced way and helps improve anytime performance.
\end{description}
In the experiments in Appendix~\ref{app:more_experiments} we test the effect of these variants.

\subsection{BFS Variants}
\label{app:bfs_heuristics}
We compare an even larger number of variants of best-first search (BFS). Each variant defines the same ordering based on a heuristic value $\mathcal{H}$, but differs in what that heuristic value is. The general priority order in an OR node is: first, the node with the lowest value for $\mathcal{H}$; second, nodes that are already expanded; third, the node with the lowest feature rank; and fourth, the node with the lowest interval start point. The selection of the left or right child in an AND node is also based on the heuristic value $\mathcal{H}$.

\codt computes the value $\mathcal{H}$ for each unexpanded node. For each expanded node, the value $\mathcal{H}$ is the minimum of the value of all its children (recursively). While \codt selects a node in multiple steps using different priority queues in each search node, giving each expanded node the minimal heuristic value of all its children, in effect creates a global priority queue with only the unexpanded nodes in it.

The BFS heuristics considered here can be grouped into three categories: (i) based on a random number, (ii) based on a combination of support and lower bound, and (iii) based on the feature and split discrepancy (the LDS variant).

\begin{description}
    \item[Random] \textsc{BFS-Random} computes a random value for each unexpanded node. This search strategy is included as a baseline to see if other heuristics provide performance better or worse than random.
    \item[Combination of support and lower bound] Many variants of combining a node's support ($|\mathcal{D}|$) and a node's lower bound can be turned into a heuristic. In the literature, we already find two. \citet{hu2019sparse} define the curiosity heuristic $\mathcal{H} = LB / |\mathcal{D}|$. \citet{lin2020generalized_sparse} do not discuss the heuristic in their paper, but their code contains the heuristic $\mathcal{H} = LB - |\mathcal{D}|$.\footnote{See \url{https://github.com/ubc-systopia/gosdt-guesses}. Note that they write it as $|\mathcal{D}| - LB$ since they choose the node with the highest heuristic value, whereas we prioritize the node with the lowest heuristic value.} In our work, we test a number of variants, each of which can be written as a linear combination of a node's lower bound and support: $\mathcal{H} = a\cdot LB + b\cdot |\mathcal{D}|$, with $a$ and $b$ some constants. For $a$ and $b$ we consider the values $0$, to not include either the lower bound or support in the heuristic, or $1$ to include it, or $1 \times 10^{-6}$ to include it but only as a tie-breaker. In Table~\ref{tab:bfs_heuristics} we provide an overview of the heuristics included in our experiments.
    \item[Limited discrepancy] We treat LDS as a special case of BFS which prioritizes nodes that are close to the greedy solution. We follow the search order by \citet{kiossou2026anytime_codt} who do a diagonal sweep over the feature rank and split discrepancy. Formally, let $\langle r, s, d \rangle$ represent an unexpanded node's feature rank, split discrepancy, and depth budget respectively, then we use $\mathcal{H} = M \cdot (r + s) + s - m\cdot d$ as the heuristic value, with $M$ some large number and $m$ some small value. For fixed $d$, this ensures that nodes are first sorted by the sum $r + s$ and then by $s$. This ensures the order of $r,s$ suggested by \citeauthor{kiossou2026anytime_codt}: (0,0), (1,0), (0,1), (2,0), (1,1), (0,2), \ldots . Additionally, we add the depth budget $d$ to the heuristic, as a tie breaker, so that if the other values are the same, the heuristic will prefer splits closer to the root. The feature rank $r$ is computed by finding the best split for each feature based on a local Gini impurity metric (as CART does). The split with the lowest Gini impurity gets rank 0, the second best gets rank 1, etc. Similarly, given a feature $f$, we rank each possible split $\tau$ based on a local Gini impurity metric too. The best $\tau$ gets rank 0, the second best rank 1, etc. This is captured by the split discrepancy $s$. Specifically, for a range of thresholds $\tau_a,\ldots \tau_b$, the split discrepancy of this unexpanded node is equal to the minimum of the split discrepancies of the splits in that range.

    We consider two variants of LDS that only differ in how the split threshold from a range of thresholds is chosen. The first, \textsc{LDS-Mid} follows the default by selecting the midpoint $\lfloor (a+b)/2 \rfloor$. The second, \textsc{LDS} chooses the split that minimizes the split discrepancy.
\end{description}%
When BFS is at risk of hitting a memory limit, \codt dynamically switches back to \textsc{DFS-Prio} as its strategy for selecting a node to expand in OR nodes. If hitting the memory limit is no longer a risk, \codt switches back to BFS.

\section{Proofs}
\label{app:proofs}
In this appendix, we prove the two theorems in the main text.

\subsection{Proof of Completeness}
Theorem~\ref{th:complete} states: Alg.~\ref{alg:main} (without the back-propagate step) results in a complete search tree for any
choice of \textsc{Select} (according to Def.~\ref{def:select}). We prove this theorem by induction.
\begin{proof}
    Let $P(d)$ be the proposition that after running Alg.~\ref{alg:main} (without the back-propagate step), any OR node in the search graph with state $\langle \mathcal{D}, d \rangle$ for any dataset $\mathcal{D}$ is complete.

    \emph{Base case: $d=0$.} Any OR node with state $\langle \mathcal{D}, d \rangle$ with $d=0$ has only one child: the leaf node with state $\langle \mathcal{D} \rangle$, and therefore the OR node is complete. Therefore, $P(0)$ holds.

    \emph{Recursive case.} Assume that $P(d-1)$ with $d \geq 1$ holds. Now any OR node with state $\langle \mathcal{D}, d\rangle$ will eventually be completed by Alg.~\ref{alg:main} if all its children will be completed. It can have the following children:
    \begin{itemize}
        \item A leaf node with state $\langle \mathcal{D} \rangle$: a leaf node is always complete.
        \item An AND node with state $\langle \mathcal{D} , d, f, \tau \rangle$ with two OR children with state $\langle \mathcal{D}(f \leq \tau), d-1\rangle$ and $\langle \mathcal{D}(f > \tau), d-1\rangle$: since we assume that $P(d-1)$ holds, the AND node will be completed by Alg.~\ref{alg:main}. 

        \item An UnExp node with state $\langle \mathcal{D}, d, f, \{a, \ldots, b
        \} \rangle$: Alg.~\ref{alg:main} continues until the root node is complete, and the root is not complete as long as it has at an incomplete child in its subtree. Since $\mathcal{F}$ are $S^f$ finite sets and $d$ is a finite number, the fully materialized search tree is also finite. While each expansion of an UnExp nodes creates two new UnExp nodes, the combined sizes of the threshold intervals $\{ a, \ldots, i-1\}$ and $\{i+1, \ldots, b \}$ is always strictly smaller than the size of the original UnExp node interval $\{a, \ldots, b \}$. The creation of new UnExp nodes will stop when $|\{a, \ldots, i-1
        \}| = 0$ and $|\{i+1, \ldots, b
        \}| = 0$. Therefore, expanding an interval $\{a, \ldots, b\}$, and then expanding the resulting smaller intervals, will eventually result in creating precisely $b-a +1$ AND nodes, each of which will be completed (see above). Since Alg.~\ref{alg:main} does not stop until the root is complete, and the root is not complete while there is an UnExp node in its subtree, eventually Alg.~\ref{alg:main} will replace all UnExp nodes with AND nodes for each possible split.
    \end{itemize}
    Therefore, all children of an OR node will be completed by Alg.~\ref{alg:main} or replaced with nodes that will be completed. Therefore, $P(d)$ also holds.

    Since both the base case and the recursive case hold regardless of the choice of the \textsc{Select} procedure, Theorem~\ref{th:complete} holds.
\end{proof}

\subsection{Proof of Optimality Preservation}
Theorem~\ref{th:optimal} states: The \textsc{BackPropagate} procedure preserves optimality, i.e., it never prunes any UnExp node that after expansion would improve the incumbent of the root node.

\begin{proof}
    This theorem follows directly from the definitions of the lower and upper bounds (Defs.~\ref{def:lower_bound} and~\ref{def:upper_bound}) and the definition of \textsc{BackPropagate} (Def.~\ref{def:back_propagate}).
    By definition \textsc{BackPropagate} only prunes UnExp nodes that have a lower bound $LB$ larger than or equal to the upper bound $UB$, i.e., $LB \geq UB$. Since a valid lower bound is a bound on the best loss obtainable after expansions from this node (or its replacements), and a valid upper bound $UB$ indicates that any value equal to or larger than $UB$ cannot result in an improvements of the incumbent of the root, when $LB \geq UB$ holds for a node, then it is impossible that the node, after expansions, can result in an improvement of the root incumbent. Therefore it can be pruned.
\end{proof}

\begin{table}[t!]
    \centering
    \begin{tabular}{lcccc}
    \toprule
        Dataset &
        $|\mathcal{D}|$ & 
        $|\mathcal{F}|$ &
        $\sum_f |U^f|$ &
        $|\mathcal{Y}|$ \\ \midrule

Avila & 20867 & 10 & 41110 & 12 \\
Bank & 1372 & 4 & 5016 & 2 \\
Bean & 13611 & 16 & 211966 & 7 \\
Bidding & 6321 & 9 & 12528 & 2 \\
Eeg & 14980 & 14 & 5404 & 2 \\
Fault & 1941 & 27 & 19286 & 7 \\
Htru & 17898 & 8 & 124959 & 2 \\
Magic & 19020 & 10 & 147097 & 2 \\
Occupancy & 20560 & 5 & 19721 & 2 \\
Page & 5473 & 10 & 9082 & 5 \\
Raisin & 900 & 7 & 6289 & 2 \\
Rice & 3810 & 7 & 24635 & 2 \\
Room & 10129 & 16 & 3072 & 4 \\
Segment & 2310 & 18 & 14910 & 7 \\
Skin & 245057 & 3 & 765 & 2 \\
Wilt & 4839 & 5 & 22599 & 2 \\
\midrule
Casp & 45730 & 9 & 298346 & 15903 \\
Concrete & 1030 & 8 & 1517 & 845 \\
Energy & 19735 & 28 & 107798 & 92 \\
Fish & 908 & 6 & 1810 & 827 \\
Gas & 36733 & 10 & 142100 & 4447 \\
Grid & 10000 & 12 & 119988 & 10000 \\
News & 39644 & 59 & 558454 & 1454 \\
Qsar & 546 & 8 & 1605 & 515 \\
Query1 & 10000 & 3 & 29997 & 539 \\

    \bottomrule
    \end{tabular}
    \caption{The datasets used in our experiments. The values are the number of samples, the number of numeric features, the number of unique feature values (from which we obtain all possible splits), and the number of unique class labels.}
    \label{tab:data}
\end{table}

\section{Experiment Setup Details}
\label{app:exp_setup}

\subsection{Experiment Data}
In our experiments we use the same datasets as those used by \citet{mazumder2022odt_continuous_bnb} and obtained them from their repository.\footnote{\url{https://github.com/mengxianglgal/Quant-BnB}}
These datasets are originally from the UCI Machine Learning Repository \citep{uci2017} and are preprocessed by removing features that do not aid in prediction, such as unique identifiers of samples and timestamps of recording.
We do not include the Carbon and Query2 datasets because they are multivariate regression datasets and we limit our experiments to univariate regression.
For all our experiments, we use the full dataset, i.e., both the train and test data as one dataset. Table~\ref{tab:data} shows an overview of the datasets, mentioning the number of samples, the number of numeric features, and the sum of the unique values per feature (a value very close to the total number of possible splits), and the number of unique label values.

For the baseline methods that cannot handle numeric features directly, i.e., \gosdt, \blossom, \streed, and \branches, we binarize the features in such a way that these methods solve precisely the same problem: we create one binary feature for every possible split on the numeric features.

\subsection{Configuration of the Baselines}
This section describes the baselines used in our experiments and their configuration. Since not all baselines support setting the regularization penalty $\lambda$, we default to $\lambda = 0$ in all experiments.

\begin{description}
    \item[\gosdt]\citep{lin2020generalized_sparse} \gosdt is a dynamic programming with branch-and-bound approach for solving ODTs.
    It uses a best-first search strategy with a global priority queue.
    It requires binarization of the input features, but implements a similarity-based pruning method to handle numeric features more efficiently. \gosdt can optimize a variety of objectives that can be expressed as a function of the false positives and negatives. The current version of \gosdt includes improvements proposed by \citet{mctavish2022sparse_trees_guess_ensembles} and it is this version that we compare with.\footnote{\url{https://github.com/ubc-systopia/gosdt-guesses}, v1.0.4} 
    We use the default parameters, except that we allow for a small regularization parameter $\lambda$, since we set $\lambda = 0$ in our experiments. Since \gosdt runs out of memory in all our experiments, we compare with it on subsampled datasets in Appendix~\ref{app:exp_other_baselines}.
    \item[\quantbnb]\citep{mazumder2022odt_continuous_bnb} \quantbnb is a branch-and-bound approach for solving ODTs, both for classification and regression.\footnote{\url{https://github.com/mengxianglgal/Quant-BnB}, 0a381a5}
    It is a depth-first search strategy which for each optimization depth first consider as splits only $s$ quantiles on the numeric features. If no proof of optimality is found yet, it searches for another $s$ quantiles between the previous quantiles, and repeats this process until all possible splits are considered. We use the recommended parameters: we use the $W_{1,s'}$ lower bound and set $s = 3$.
    The implementation provided by the authors does not provide a recursive formulation but has two functions, one for optimizing depth-two and another for depth-three trees. Therefore, we do not run \quantbnb to optimize trees beyond depth three.    
    \item[\streed]\citep{manual_linden2023streed} \streed is a depth-first search dynamic programming with bounds approach for learning ODTs that supports optimization of a large variety of objectives and constraints.\footnote{\url{https://github.com/AlgTUDelft/pystreed}, v.1.3.9}
    \streed expects binary features and therefore we use its internal binarizer and create a binary feature for each possible split by setting the parameter \texttt{num\_thresholds} equal to the dataset size. For some datasets this step alone, however, results in exceeding the memory limit. Moreover, \streed implements the same depth-two subroutine as MurTree \citep{demirovic2022murtree} which creates a table whose size is quadratic in the number of binary features. For small numbers of binary features (less than 1000 for example) this technique provides a major speed-up, but in our experiments, it may exceed the memory limit or the algorithm never finishes iterating over each cell in the table before the time-out. Therefore, we see in the experiments that \streed for many datasets does not have a result, showing the importance of having dedicated techniques for handling numeric features.
    Support for regression was added to \streed by \citet{manual_bos2024srt}. For classification, we run the optimization task \texttt{cost\_complex\_accuracy} and for regression we use \texttt{cost\_complex\_regression}. For the rest, we use the default parameters.
    \item[\blossom]\citep{demirovic2023anytime_blossom}
    \blossom is an adaption of previous DP approaches that aims at anytime performance.\footnote{\url{https://gitlab.laas.fr/ehebrard/blossom}, cf7eec31}
    However, it is does not utilize a cache. It uses a depth-first search strategy, but makes sure that both the left and the right tree have solutions before proceeding with optimizing only one side. The provided implementation only supports binary classification, and therefore we left it out of the experiments in the main text (where we also include multi-class classification datasets), but report results on only the binary classification separately in Appendix~\ref{app:exp_other_baselines}. We run \blossom using the default parameters.
    
    \item[\contree]\citep{manual_brita2025contree} \contree is a depth-first search algorithm that for each search node keeps track of intervals of split thresholds that may still lead to an improvement of the incumbent.\footnote{\url{https://github.com/ConSol-Lab/contree}, v1.0.8}
    We run \contree with the default parameters.
    \item[\branches]\citep{chaouki2025branches} \branches models ODT learning as and AND-OR search problem, similar to us.\footnote{\url{https://github.com/Chaoukia/branches}, commit 2937832}. The current implementation is in Python, and therefore not as optimized for runtime performance as the other baselines. \branches expects binary or categorical features, so we use the same binarization as for \streed.
    Since \branches runs out of memory in all our experiments, we compare with it on subsampled datasets in Appendix~\ref{app:exp_other_baselines}.
    We run \branches with the default parameters.
    
    \item[\cacontree]\citep{kiossou2026anytime_codt} \cacontree is a limited-discrepancy search adaption of \contree.\footnote{We used the code shared in the link in the original arXiv paper: \url{https://anonymous.4open.science/r/contree-rs-C7B8}.}
    It runs \contree as a subprocedure while restricting the splits allowed to make. In the first iteration, only the original greedy splits can be considered. In the second iteration, also all the splits that have a discrepancy of one with respect to the greedy splits, etc... We set the minimum support to one, and use the parameters \texttt{--fast-d2}, \texttt{--split-selection-strategy first}, and \texttt{--use-lds}, \texttt{--sort-by-heuristic}.
\end{description}

\begin{figure*}[t!]
    \centering
    \begin{subfigure}{\textwidth}
        \centering
        \includegraphics[width=\textwidth]{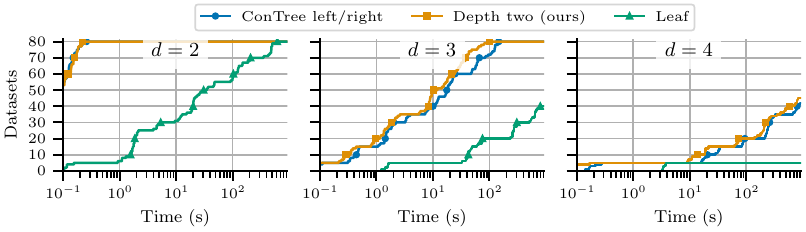}
        \caption{Runtime (s) performance of the three terminal cases.}
    \label{fig:terminal_runtime}
    \end{subfigure}

    \begin{subfigure}{\textwidth}
        \centering
        \includegraphics[width=\textwidth]{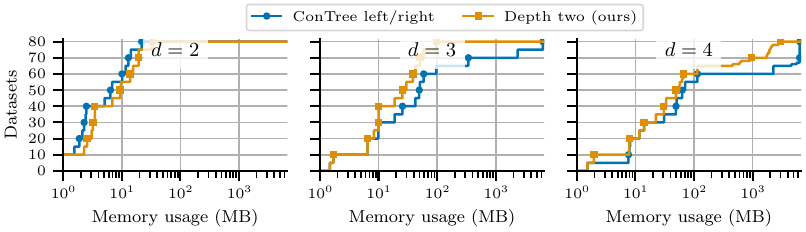}
        \caption{Memory usage (MB) (s) for the two terminal cases. The leaf case is left out, since it frequently does not finish the search.}
    \label{fig:terminal_memory}
    \end{subfigure}
\label{fig:terminal_results}
\caption{Comparison of the three terminal cases for CODT (single thread). Both the left-right and our depth-two terminal case outperform the leaf terminal case by several orders of magnitude. Our depth-two terminal case has a slight edge over the left-right terminal case in both memory and runtime.}
\end{figure*}
\begin{figure}[t!]
    \centering
    \includegraphics[width=\columnwidth]{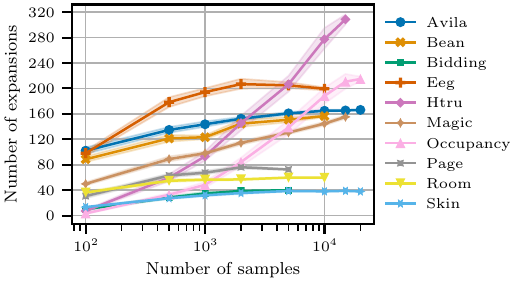}
    \caption{Scalability of solving depth-two trees for all classification datasets with at least $5000$ samples. Note the logarithmic horizontal axis. The number of expansions seems to be a logarithmic function of the number of samples.}
    \label{fig:d2_ablation_samples}
\end{figure}

Because \contree and \textsc{Quant-BnB} are previously shown to be orders of magnitude faster than the MIP approaches \textsc{OCT} \citep{bertsimas2017optimal} and \rsoct \citep{hua2022odt_mip_branching}, and the SAT approach by \citet{shati2023sat_odt_nonbin}, we do not compare with these methods. 

\section{Additional Experiment Results}
\label{app:more_experiments}
This appendix provides further experiment results. 
First, we experiment with what terminal case to use. 
Second, we provide a comparison of all \numstrategies search strategies.
Third and finally, we provide additional experiment results for comparing \codt with the baselines.

\subsection{Terminal Case}
By default, the terminal case of the search tree is the leaf node case, i.e., when the remaining depth limit reaches zero. 
However, as described in Appendix~\ref{app:implementation}, the use of a larger terminal case in the search tree is beneficial for both runtime and memory usage. For example, \citet{demirovic2022murtree} introduce a specialized terminal case for subproblems with a depth limit of two, provided the input features are all binary. \citet{manual_brita2025contree} provide a specialized terminal case for when the depth limit is two, but the root node split of the subproblem is already decided. Both show significant improvements in memory usage and runtime. Therefore, we also experiment with what terminal case to use in the search tree.

We test the following two questions.
First, what is the impact of replacing the base case from a leaf node to a larger subtree?
Second, how does our terminal case scale with an increasing number of samples? 

\paragraph{The impact of changing the terminal case.}

We compare three different terminal cases for the search tree:
\begin{description}
    \item[Leaf] The terminal case in the search tree is a leaf node, i.e., the remaining depth budget $d$ is zero. In our implementation, we keep track of the cost tuple when splitting the data (see Appendix~\ref{app:implementation}), and therefore computing the runtime complexity of the leaf terminal case is $\mathcal{O}(1)$ (assuming the search node is already constructed).
    \item[Left/right] The terminal case in the search tree is an AND node with state $\langle \mathcal{D}, d, f, \tau \rangle$ with $d$ equal to two. That is, the depth-two subproblem for which the split feature $f$ and the split threshold $\tau$ of the root node split is fixed. This is the terminal case that \citet{manual_brita2025contree} propose. We call it the \emph{left/right} terminal case, because it simultaneously solves the left and right subtree of a depth-two tree. The runtime complexity of this procedure is $\mathcal{O}(|\mathcal{D}||\mathcal{F}|)$.
    \item[Depth two] The terminal case in the search tree is an OR node with state $\langle \mathcal{D}, d \rangle$ with $d$ equal to two, i.e., a full depth-two subproblem. This is a new terminal case which we propose and describe in Appendix~\ref{app:implementation}. We solve it by using the main CODT algorithm (Alg.~\ref{alg:main} with a custom depth-two search strategy, which calls the \emph{left-right} terminal case for each AND node). In the first two experiments in this subsection, we fix the depth-two search strategy to \textsc{DFS-Blossom}. Below, we experiment with which depth-two search strategy is empirically best.
\end{description}

We run the same experiment as in the main text. For each dataset, we run \codt with the three terminal cases five times, for depth $d = 2, 3, 4$. We run this experiment without multi-threading.

Fig.~\ref{fig:terminal_runtime} reports the cumulative runtime distribution for the three terminal cases. These results show that both the left/right terminal case and the depth-two terminal case are significantly faster than using the leaf terminal case. For $d=2$, the depth-two terminal case has a reduces the runtime compared to the leaf terminal case by  $99.7\%$ (geometric mean reduction), while the left/right and depth-two terminal case perform approximately similar. For $d\geq3$, we see similar speedups.
This huge improvement was also reported by \citet{manual_brita2025contree} and is explained by the fact that optimizing the final left and right branching decisions can be done in linear time by the left/right terminal case, which is a huge improvement compared to splitting the data for each possible split, as would be done in the leaf terminal case.

Our proposal to turn depth-two OR nodes into the terminal case has a slight edge in runtime over the left/right terminal case for $d\geq 3$. For $d=3$, we observe an average runtime reduction of $32\%$, and for $d=4$, a reduction of $28\%$ (geometric mean, averaged over all runs where both cases finished before the time-out).

Fig.~\ref{fig:terminal_memory} shows that our depth-two terminal case has a positive impact on the memory usage as well. For $d=3$, we record an average memory usage reduction of $60\%$, and for $d=4$, a reduction of $48\%$ (again, computed with the geometric mean, averaged over all runs where both cases finished before the time-out).

Therefore, in all our experiments, we use the depth-two terminal case as the default terminal case.

\paragraph{Scalability of the depth-two terminal case.}
To solve depth-two trees, in the worst-case we need to call the left/right procedure one time for each possible split for each feature, resulting in a total worst-case runtime of $\mathcal{O}(|\mathcal{D}|^2|\mathcal{F}|^2)$. However, in practice we observe that we solve depth-two problems much faster than quadratic time. 

To test this, we set up an experiment where, for each of the datasets with at least $5000$ samples, we randomly sample an increasing amount of samples and measure the number of expansions required to solve the optimal depth-two decision tree problem. An expansion here is the single execution of the left/right procedure (Alg.~\ref{alg:d1}), which runs in $\mathcal{O}(|\mathcal{D}||\mathcal{F}|)$ time. Since we sample randomly, we repeat the experiment $100$ times and report the average number of expansions.

\begin{figure*}[t!]
    \centering
    \includegraphics[width=\textwidth]{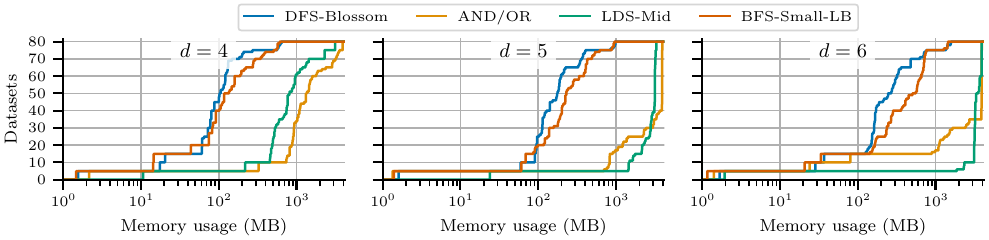}
    \caption{Cumulative distribution of memory usage (MB) for the four search strategy categories.}
    \label{fig:ecdf_ss_memory}
\end{figure*}
\begin{figure}[t!]
    \centering
    \includegraphics[width=\columnwidth]{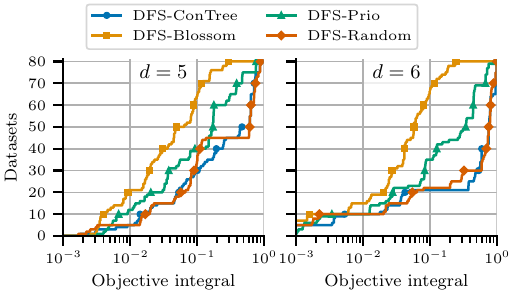}
    \caption{Cumulative distribution of the objective integral (OI) for the depth-first search strategies. The ordering by \contree, simulated by \textsc{DFS-ConTree} performs similar to a random ordering. Both orderings proposed by us, \textsc{DFS-Prio} and \textsc{DFS-Blossom} perform significantly better.}
    \label{fig:oi_ecdf_dfs}
\end{figure}

Fig.~\ref{fig:d2_ablation_samples} shows that the number of expansions required to solve optimal depth-two trees seems to scale as a logarithmic function of the number of samples in the dataset. If the number of expansions indeed is a logarithmic function of the samples, then the true worst-case runtime might be $\mathcal{O}(|\mathcal{F}|^2|\mathcal{D}| \log |\mathcal{D}|)$. As of yet, however, we do not know if this bound is correct, and leave this as future work to prove.

\subsection{Search Strategies}
\label{app:search_strategies_experiments}
In the experiments below, we review the variants for three of the four search strategy categories: DFS, BFS, and LDS.

\paragraph{Depth-first search.}

\begin{figure*}[p!]
    \centering
    \begin{subfigure}{\textwidth}
        \includegraphics[width=\textwidth]{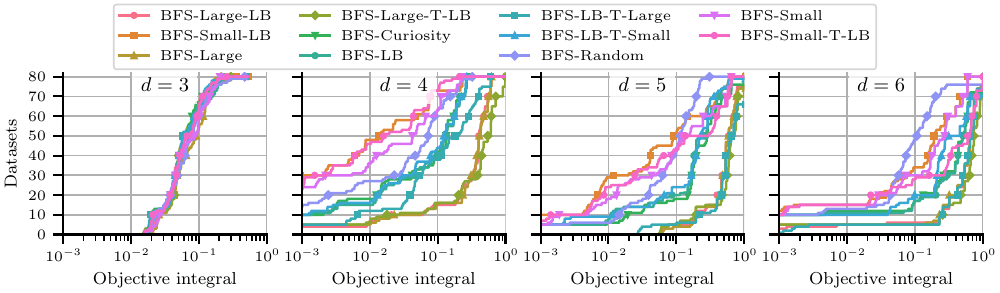}
        \caption{Cumulative distribution of the OI for the best-first search strategies. For $d=4$ and $d=5$, the heuristics that prioritize expanding search nodes with small support perform the best. For $d=6$, the random heuristic emerges as the best approach.}
        \label{fig:bfs_oi_ecdf}
    \end{subfigure}
    
    \begin{subfigure}{\textwidth}
    \centering
        \includegraphics[width=\columnwidth]{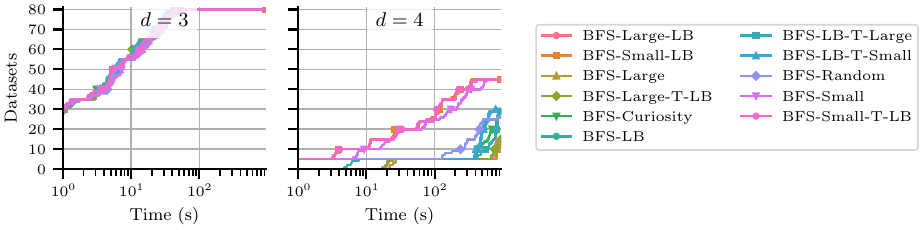}
        \caption{Cumulative distribution of runtime (s) for best-first search methods. For $d=3$ we observe no significant differences. For $d=4$, the heuristics that focus on search nodes with small support perform best, while the remaining heuristics perform worse than random.}
        \label{fig:bfs_runtime}
    \end{subfigure}
    \label{fig:bfs_results}
    \caption{Results for the best-first search strategies.}
\end{figure*}
\begin{figure*}[p!]
    \begin{subfigure}{0.48\textwidth}
        %note the ratio here is computed based on the ratios of the figures, so that they appear the same size!
        \includegraphics[width=.969689455\textwidth]{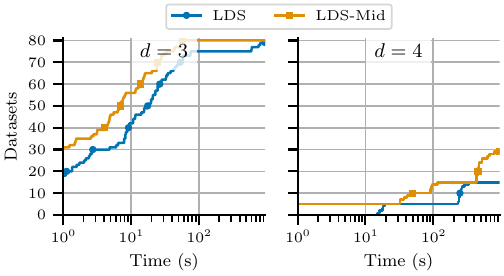}
        \caption{Cumulative runtime (s) distribution for the limited discrepancy search strategies for $d=3,4$.}
        \label{fig:lds_runtime}
    \end{subfigure}
    \hfill
    \begin{subfigure}{0.48\textwidth}
        \includegraphics[width=\textwidth]{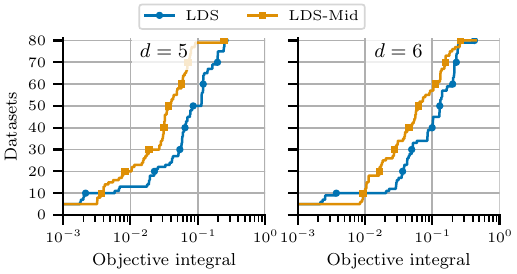}
        \caption{Cumulative distribution of the objective integral (OI) for the limited discrepancy search strategies for $d=5,6$.}
        \label{fig:lds_oi}
    \end{subfigure}
    \caption{For LDS, for both runtime and anytime performance (objective integral), picking the midpoint of the threshold intervals is better than picking the split with the lowest discrepancy.}
    \label{fig:lds}
\end{figure*}
\begin{figure*}[t!]
    \centering
    \includegraphics[width=\textwidth]{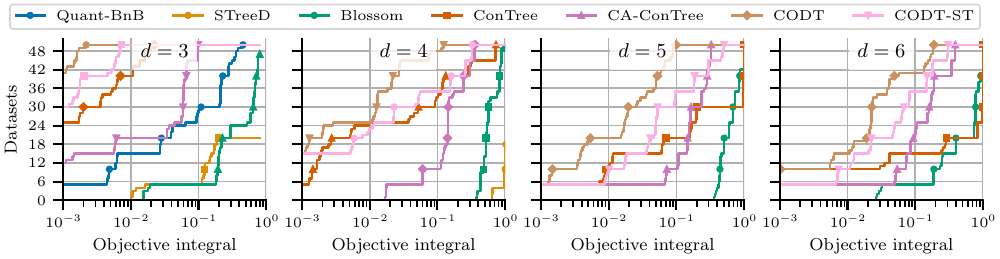}
    \caption{Cumulative distribution of the objective integral (OI) for all baselines methods including \blossom for only the binary classification datasets.}
    \label{fig:oi_ecdf_blossom}
\end{figure*}

We compare the four DFS strategies described in Appendix~\ref{app:extended_search_strategies}.
For runtime performance, we see small differences and therefore they are not reported here. For depth three and four we also see small differences, and therefore we limit the result reported here to training trees of depth five and six.

Fig.~\ref{fig:oi_ecdf_dfs} shows the anytime performance for the four DFS strategies, which reveals that the search order of \contree, which we mimic in the strategy \textsc{DFS-ConTree} performs similar to a DFS strategy that chooses the next node to expand randomly.
As is also the case for \contree, both \textsc{DFS-ConTree} and \textsc{DFS-Random} the anytime performance gets worse for larger depth limits.

The two new DFS approaches that we propose, \textsc{DFS-Prio}, which prioritizes expanding nodes with a low lower bound, and \textsc{DFS-Blossom} which balances the expansion of left and right subproblems, perform significantly better.
It is particularly interesting to see how such a small change to the search strategy, i.e., balancing left and right subproblem expansion, can have such an impact on the performance. 

Based on these results, we use \textsc{DFS-Blossom} as the DFS approach in the main text.

\paragraph{Best-first search.}
Fig.~\ref{fig:bfs_oi_ecdf} and~\ref{fig:bfs_runtime} compare the ten BFS heuristics presented in Table~\ref{tab:bfs_heuristics}, and as a baseline check, we include \textsc{BFS-Random} which assigns at a random heuristic value to a search node when it is constructed.

Interestingly, only three of the heuristics have better performance (both runtime and OI) than the random heuristic: \textsc{BFS-Small-LB}, \textsc{BFS-Small} and \textsc{BFS-Small-T-LB}. This clearly shows that heuristics that prioritize nodes with small support should be prioritized. The only heuristic that prioritizes small nodes but is not better than the random heuristic is \textsc{BFS-LB-T-Small}, which prioritizes nodes with a low lower bound and uses small support as a tie breaker. Given that \textsc{BFS-Small-T-LB}, which prioritizes nodes with small support and uses the lower bound as a tie breaker does perform better than the random heuristic, we can clearly attribute the performance to prioritizing nodes with small support. Similarly, heuristics that prioritize nodes with large support all perform significantly worse than the random heuristic. 

Surprisingly, for $d=6$, the random heuristic outperforms all other heuristics in anytime performance. Apparently, for larger depths some random expansions help diversify the search and improve the anytime performance.

Based on these results, we use \textsc{BFS-Small-LB} as the BFS approach in the main text

\paragraph{Limited discrepancy search.}
Figs.~\ref{fig:lds_runtime} and~\ref{fig:lds_oi} compare two variants of LDS, both based on \cacontree \citep{kiossou2026anytime_codt}. The only difference is how the split threshold $\tau_i$ is chosen. \textsc{LDS} chooses the split threshold with the lowest discrepancy, whereas $\textsc{LDS-Mid}$ follows the default in this paper: it chooses the mid point, i.e., $i = \lfloor (a+b)/2\rfloor$. Fig.~\ref{fig:lds_runtime} shows that for the smaller depth limits, \textsc{LDS-Mid} outperforms \textsc{LDS} in runtime. Similarly, Fig.~\ref{fig:lds_oi} shows \textsc{LDS-Mid} is also better for anytime performance. Therefore, we use \textsc{LDS-Mid} as the LDS method in the main text.

\paragraph{Memory usage.}
\label{app:memory}
Fig.~\ref{fig:ecdf_ss_memory} shows the distribution of the memory usage for the four search strategy categories. DFS, as expected, has the lowest memory usage. \textsc{BFS-Small-LB} follows shortly after, which shows that this BFS variant is relatively similar to DFS. Since it prioritizes nodes with low support it adopts a search order similar to DFS and closes nodes with low support for expanding new nodes with larger support. The other BFS strategies that do not prioritize search nodes with low support use much more memory. This is also the case for LDS and AND-OR. These strategies open many new search nodes but fail to close the nodes, resulting in a search tree that is too large for memory.

\paragraph{Optimization profiles.}
Figs.~\ref{fig:anytime_plot_search_strategies}-\ref{fig:anytime_plot_search_strategies_3} shows the optimization profiles of the four search strategy categories for twelve of the datasets (we left out the simple ones). They show that \textsc{DFS-Blossom} and \textsc{LDS-Mid} often have the best anytime performance, while \textsc{BFS-Small-LB} occasionally also does well.

For only a few cases among those plotted, do we see lower bounds are closing the optimization gap. For example, for EEG $d=4$, \textsc{DFS-Blossom} starts to close the gap close to the time-out, and for HTRU $d=4$, both \textsc{BFS-Small-LB} and \textsc{DFS-Blossom} close the optimization gap before the time out.
For larger depth limits, informative lower bounds are hard to obtain, which may explain the poor performance of \textsc{AND-OR} and BFS which use the lower bounds as the primary guiding heuristic.

\subsection{Further Comparison with Baselines}
\label{app:exp_other_baselines}

\begin{figure}[t!]
    \centering
    \includegraphics[width=\columnwidth]{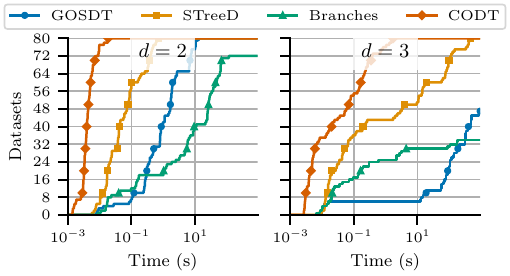}
    \caption{Cumulative runtime (s) distribution on the down-sampled classification datasets (100 random samples).}
    \label{fig:runtime_small_others}
\end{figure}
\paragraph{Comparison with \blossom.}
Since the available implementation for \blossom only supports binary classification, we did not include it in the comparison in the main text, because not all datasets in our classification benchmark are binary classification datasets. Therefore, we report a separate comparison here on only the binary classification datasets (see Table~\ref{tab:data}).
\blossom did not find and prove the optimal solution for any of the benchmarks we rain, therefore we here only discuss its anytime performance. 

Fig.~\ref{fig:oi_ecdf_blossom} shows the anytime performance of \blossom and the other methods on the binary classification datasets. It performs worse than all the methods which are designed specifically for handling numeric features. It only outperforms \streed, which just as \blossom, assumes the input data is binarized. 

The results for \codt in this plot are obtained with the search strategy \textsc{DFS-Blossom}, the DFS strategy inspired by \blossom, which shows that the core idea of the \blossom algorithm, to balance expansion of left and right subtrees translates well to our setting with numeric features.

\paragraph{Comparison with \gosdt, \streed, and \branches.}
For $d=3$ \gosdt runs into the 32GB memory limit for each dataset and for $d=2$, it can solve three datasets within the time and memory limit. \branches runs into the 32GB memory limit for $d=2$ for every dataset. In the main text, we also saw that \streed struggles with memory and runtime. Therefore, we here provide another comparison on subsampled classification datasets.

For each classification datasets, we sample five smaller datasets of 100 samples, under the condition that there are at least two distinct classes. We then compare the baselines \gosdt, \streed, and \branches with \codt for both $d=2$ and $d=3$.

The results are shown in Fig.~\ref{fig:runtime_small_others}. \codt (single threaded) is several orders of magnitude faster than both \gosdt and \branches and one order of magnitude faster than \streed.

\paragraph{Result table.}
The figures in the main text only provide a summary overview of the results. For completeness, we report the runtime results for \codt and the baselines in Table~\ref{tab:runtime_others}.

\begin{table}[t!]
    \centering
    \small
    \setlength\tabcolsep{3pt}
    \begin{tabular}{lccccc}
\toprule

Dataset & Depth & Quant-BnB & ConTree & CODT-ST & CODT \\ \midrule

%%%%%%%%%%%%%%%%%%%%%%%%%%%%%%%%%%%%%%%%%%
% TTOP/TTOS table
%%%%%%%%%%%%%%%%%%%%%%%%%%%%%%%%%%%%%%%%%%
Avila & 3 &
- & % TTOS: 900 (avila d=3) quantbnb
41 & % TTOS: 25 (avila d=3) contree
77 & % TTOS: 51 (avila d=3) codt-1t
\textbf{14} \\ % TTOS: 10 (avila d=3) codt
% Skipping avila d=4, all SS trivial or unsolved
Bank & 3 &
2 & % TTOS: 2 (bank d=3) quantbnb
\textbf{\textless 1} & % TTOS: 0 (bank d=3) contree
\textbf{\textless 1} & % TTOS: 0 (bank d=3) codt-1t
\textbf{\textless 1} \\ % TTOS: 0 (bank d=3) codt
Bean & 3 &
- & % TTOS: 614 (bean d=3) quantbnb
61 & % TTOS: 57 (bean d=3) contree
62 & % TTOS: 62 (bean d=3) codt-1t
\textbf{12} \\ % TTOS: 10 (bean d=3) codt
% Skipping bean d=4, all SS trivial or unsolved
Bidding & 3 &
19 & % TTOS: 19 (bidding d=3) quantbnb
3 & % TTOS: 3 (bidding d=3) contree
\textbf{\textless 1} & % TTOS: 0 (bidding d=3) codt-1t
\textbf{\textless 1} \\ % TTOS: 0 (bidding d=3) codt
& 4 &
- & % Missing result (bidding d=4) quantbnb
28 & % TTOS: 26 (bidding d=4) contree
50 & % TTOS: 3 (bidding d=4) codt-1t
\textbf{17} \\ % TTOS: 0 (bidding d=4) codt
Eeg & 3 &
- & % TTOS: 900 (eeg d=3) quantbnb
148 & % TTOS: 118 (eeg d=3) contree
155 & % TTOS: 119 (eeg d=3) codt-1t
\textbf{24} \\ % TTOS: 20 (eeg d=3) codt
% Skipping eeg d=4, all SS trivial or unsolved
Fault & 3 &
- & % TTOS: 900 (fault d=3) quantbnb
52 & % TTOS: 46 (fault d=3) contree
29 & % TTOS: 4 (fault d=3) codt-1t
\textbf{6} \\ % TTOS: 3 (fault d=3) codt
% Skipping fault d=4, all SS trivial or unsolved
Htru & 3 &
- & % TTOS: 900 (htru d=3) quantbnb
173 & % TTOS: 119 (htru d=3) contree
196 & % TTOS: 150 (htru d=3) codt-1t
\textbf{36} \\ % TTOS: 29 (htru d=3) codt
% Skipping htru d=4, all SS trivial or unsolved
Magic & 3 &
- & % TTOS: 862 (magic d=3) quantbnb
93 & % TTOS: 39 (magic d=3) contree
120 & % TTOS: 119 (magic d=3) codt-1t
\textbf{26} \\ % TTOS: 26 (magic d=3) codt
% Skipping magic d=4, all SS trivial or unsolved
Occupancy & 3 &
- & % TTOS: 900 (occupancy d=3) quantbnb
24 & % TTOS: 15 (occupancy d=3) contree
19 & % TTOS: 11 (occupancy d=3) codt-1t
\textbf{5} \\ % TTOS: 2 (occupancy d=3) codt
% Skipping occupancy d=4, all SS trivial or unsolved
Page & 3 &
627 & % TTOS: 627 (page d=3) quantbnb
4 & % TTOS: 2 (page d=3) contree
5 & % TTOS: 2 (page d=3) codt-1t
\textbf{1} \\ % TTOS: 1 (page d=3) codt
& 4 &
- & % Missing result (page d=4) quantbnb
\textbf{633} & % TTOS: 339 (page d=4) contree
- & % TTOS: 900 (page d=4) codt-1t
727 \\ % TTOS: 244 (page d=4) codt
Raisin & 3 &
61 & % TTOS: 61 (raisin d=3) quantbnb
\textbf{\textless 1} & % TTOS: 0 (raisin d=3) contree
2 & % TTOS: 2 (raisin d=3) codt-1t
\textbf{\textless 1} \\ % TTOS: 0 (raisin d=3) codt
& 4 &
- & % Missing result (raisin d=4) quantbnb
\textbf{98} & % TTOS: 11 (raisin d=4) contree
- & % TTOS: 900 (raisin d=4) codt-1t
259 \\ % TTOS: 1 (raisin d=4) codt
Rice & 3 &
- & % TTOS: 900 (rice d=3) quantbnb
21 & % TTOS: 6 (rice d=3) contree
34 & % TTOS: 8 (rice d=3) codt-1t
\textbf{6} \\ % TTOS: 1 (rice d=3) codt
% Skipping rice d=4, all SS trivial or unsolved
Room & 3 &
- & % TTOS: 900 (room d=3) quantbnb
1 & % TTOS: 0 (room d=3) contree
3 & % TTOS: 1 (room d=3) codt-1t
\textbf{\textless 1} \\ % TTOS: 0 (room d=3) codt
& 4 &
- & % Missing result (room d=4) quantbnb
\textbf{62} & % TTOS: 16 (room d=4) contree
478 & % TTOS: 16 (room d=4) codt-1t
64 \\ % TTOS: 23 (room d=4) codt
Segment & 3 &
70 & % TTOS: 70 (segment d=3) quantbnb
2 & % TTOS: 1 (segment d=3) contree
5 & % TTOS: 4 (segment d=3) codt-1t
\textbf{\textless 1} \\ % TTOS: 1 (segment d=3) codt
& 4 &
- & % Missing result (segment d=4) quantbnb
\textbf{297} & % TTOS: 159 (segment d=4) contree
- & % TTOS: 900 (segment d=4) codt-1t
516 \\ % TTOS: 82 (segment d=4) codt
Skin & 3 &
284 & % TTOS: 283 (skin d=3) quantbnb
8 & % TTOS: 7 (skin d=3) contree
12 & % TTOS: 10 (skin d=3) codt-1t
\textbf{4} \\ % TTOS: 3 (skin d=3) codt
& 4 &
- & % Missing result (skin d=4) quantbnb
\textbf{170} & % TTOS: 35 (skin d=4) contree
- & % TTOS: 461 (skin d=4) codt-1t
230 \\ % TTOS: 119 (skin d=4) codt
Wilt & 3 &
18 & % TTOS: 18 (wilt d=3) quantbnb
\textbf{\textless 1} & % TTOS: 0 (wilt d=3) contree
\textbf{\textless 1} & % TTOS: 0 (wilt d=3) codt-1t
\textbf{\textless 1} \\ % TTOS: 0 (wilt d=3) codt
& 4 &
- & % Missing result (wilt d=4) quantbnb
21 & % TTOS: 9 (wilt d=4) contree
63 & % TTOS: 63 (wilt d=4) codt-1t
\textbf{16} \\ % TTOS: 16 (wilt d=4) codt
\midrule
Ratio &
 & 60.3
 & 2.3
 & 3.9
 & 1.0
\\

\bottomrule
\end{tabular}
\caption{Runtime (s) for our method \codt and the baselines \quantbnb and \contree for the classification datasets. Dataset-depth combinations where either all methods are done within one second or none finish within the time-out (15 min.) are left out. `-' indicates time-out. Both \blossom and \cacontree did not finish for any dataset within the time limit. The reported ratio is the mean geometric ratio of runtime compared to \codt.}
\label{tab:runtime_others}

\end{table}

\begin{figure*}[p]
\centering
\begin{subfigure}{\textwidth}
    \centering
    \includegraphics[width=\linewidth]{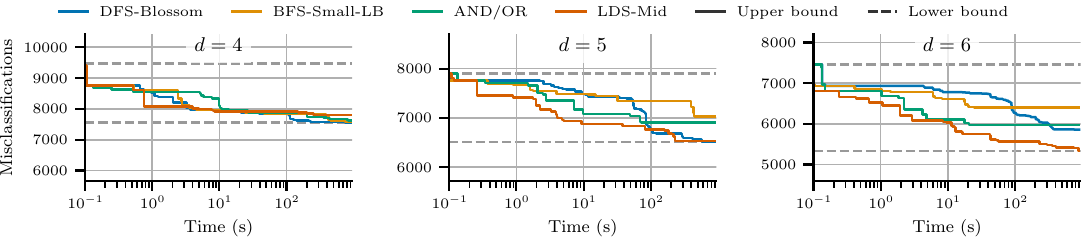}
    \caption{Avila}
    \label{fig:anytime_avila}
\end{subfigure}

\begin{subfigure}{\textwidth}
    \centering 
    \includegraphics[width=\linewidth]{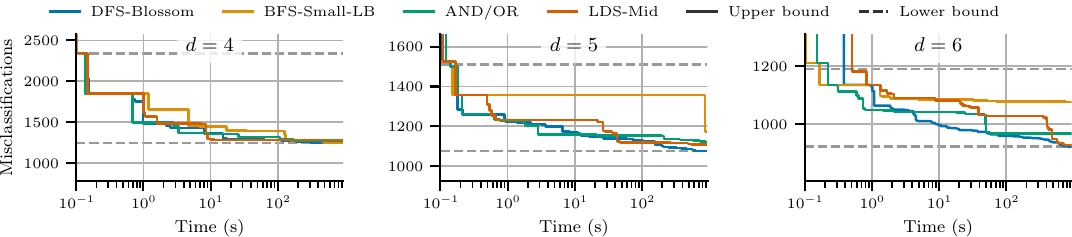} 
    \caption{Bean} 
    \label{fig:anytime_bean} 
\end{subfigure}

\begin{subfigure}{\textwidth} 
    \centering 
    \includegraphics[width=\linewidth]{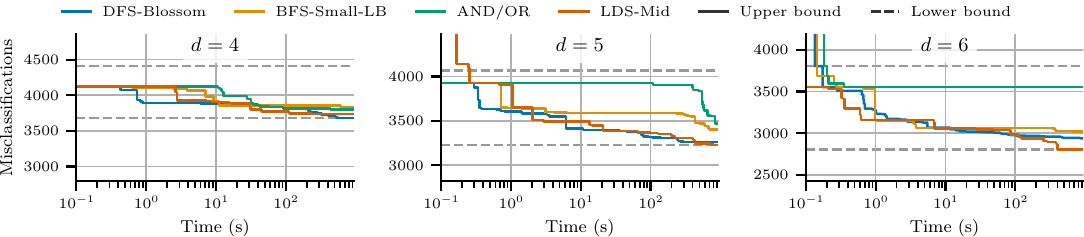} 
    \caption{EEG} 
    \label{fig:anytime_eeg} 
\end{subfigure} 

\begin{subfigure}{\textwidth} 
    \centering 
    \includegraphics[width=\linewidth]{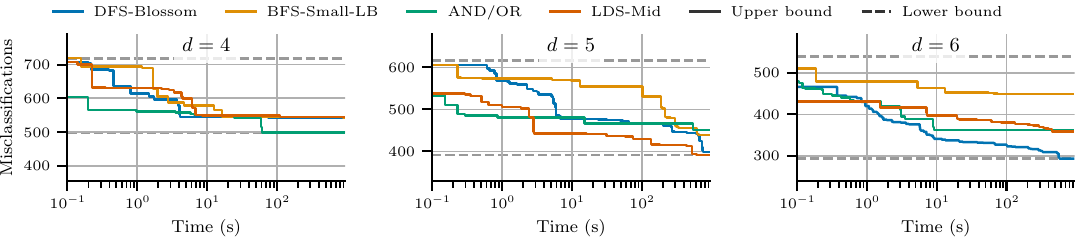} 
    \caption{Fault} 
    \label{fig:anytime_fault} 
\end{subfigure} 

\caption{Upper bound (incumbent) and lower bounds for the four search strategy categories for several datasets. The top gray dashed line indicates the initial CART solution. The bottom gray dashed line indicates the optimal or best solution found by any method.} 
\label{fig:anytime_plot_search_strategies} 
\end{figure*}

\begin{figure*}[p]
\centering
\begin{subfigure}{\textwidth} 
    \centering 
    \includegraphics[width=\linewidth]{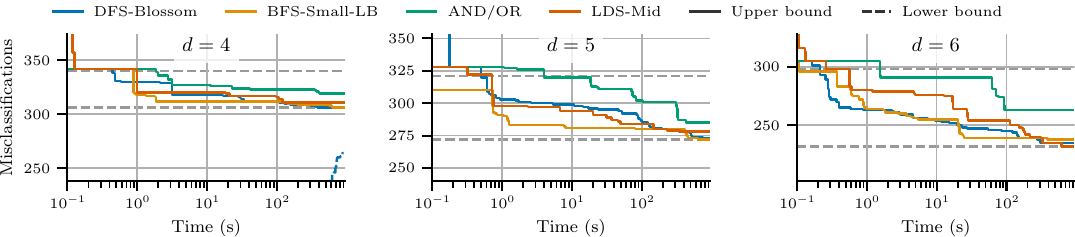} 
    \caption{HTRU} 
    \label{fig:anytime_htru} 
\end{subfigure} 

\begin{subfigure}{\textwidth}
    \centering
    \includegraphics[width=\linewidth]{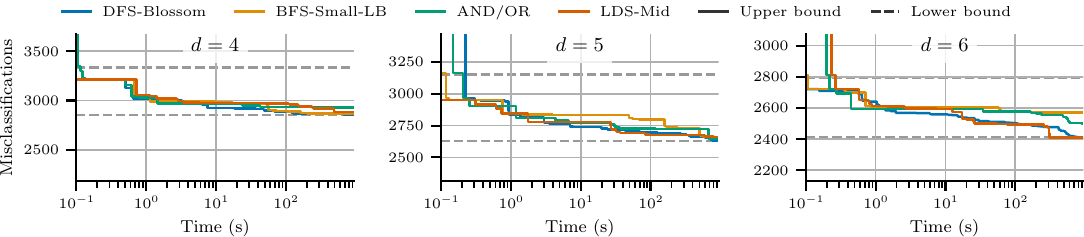}
    \caption{Magic}
    \label{fig:anytime_magic}
\end{subfigure}

\begin{subfigure}{\textwidth}
    \centering 
    \includegraphics[width=\linewidth]{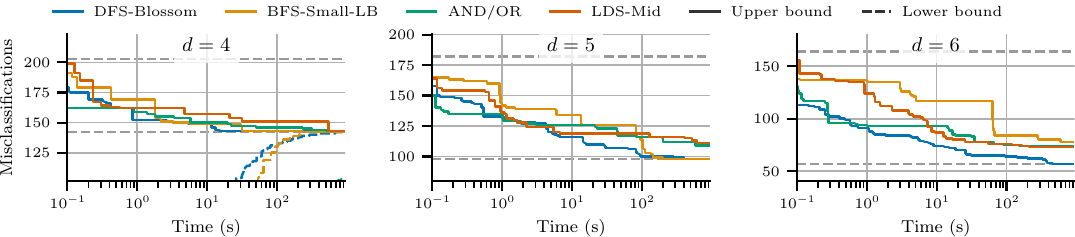} 
    \caption{Occupancy} 
    \label{fig:anytime_occupancy} 
\end{subfigure}

\begin{subfigure}{\textwidth} 
    \centering 
    \includegraphics[width=\linewidth]{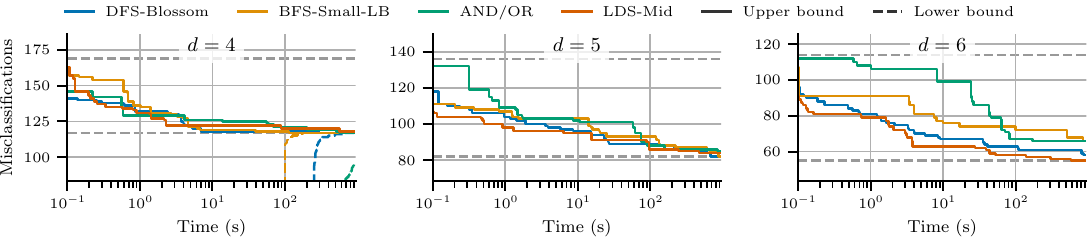} 
    \caption{Page} 
    \label{fig:anytime_page} 
\end{subfigure} 

\caption{Upper bound (incumbent) and lower bounds for the four search strategy categories for several datasets. The top gray dashed line indicates the initial CART solution. The bottom gray dashed line indicates the optimal or best solution found by any method.} 
\label{fig:anytime_plot_search_strategies_2} 
\end{figure*}

\begin{figure*}[p]
\centering
\begin{subfigure}{\textwidth} 
    \centering 
    \includegraphics[width=\linewidth]{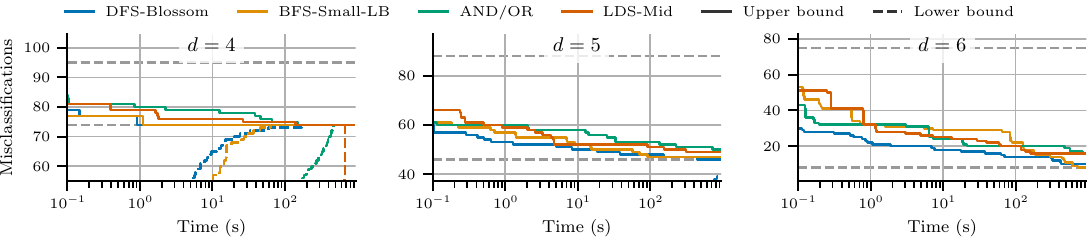} 
    \caption{Raisin} 
    \label{fig:anytime_raisin} 
\end{subfigure} 

\begin{subfigure}{\textwidth}
    \centering
    \includegraphics[width=\linewidth]{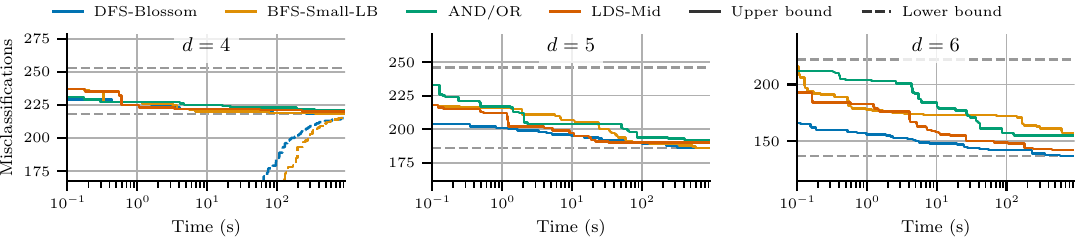}
    \caption{Rice}
    \label{fig:anytime_rice}
\end{subfigure}

\begin{subfigure}{\textwidth}
    \centering 
    \includegraphics[width=\linewidth]{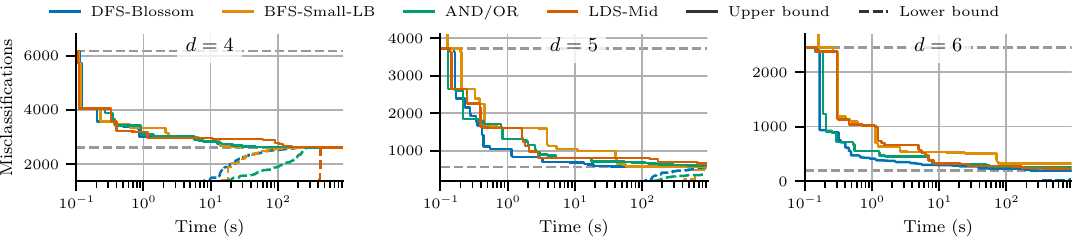} 
    \caption{Skin} 
    \label{fig:anytime_skin} 
\end{subfigure}

\begin{subfigure}{\textwidth} 
    \centering 
    \includegraphics[width=\linewidth]{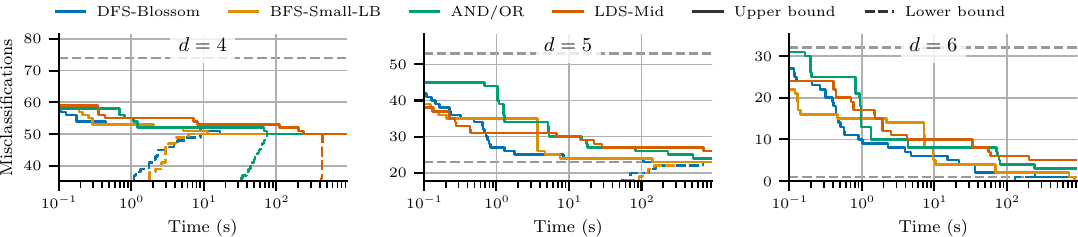} 
    \caption{Wilt} 
    \label{fig:anytime_wilt} 
\end{subfigure} 

\caption{Upper bound (incumbent) and lower bounds for the four search strategy categories for several datasets. The top gray dashed line indicates the initial CART solution. The bottom gray dashed line indicates the optimal or best solution found by any method.} 
\label{fig:anytime_plot_search_strategies_3} 
\end{figure*}

\end{document}